
\documentclass{llncs}
\usepackage{wrapfig}
\usepackage{bbm}
\usepackage{algorithm2e}
\usepackage{subcaption}
\captionsetup{compatibility=false}
\usepackage{amsmath}
\usepackage{graphicx}
\usepackage{amsfonts}
\usepackage[square,numbers]{natbib}



  %
  {%
    \end{oldthebibliography}%
  }
 \relax

\pdfpagewidth=8.5truein
\pdfpageheight=11truein
\DeclareMathOperator*{\argmax}{arg\,max}

\begin{document}


\title{A Robust UCB Scheme for Active Learning in Regression from Strategic Crowds}



%
%
%
%

%


\author{ Divya Padmanabhan \inst{1} \and Satyanath Bhat\inst{1} \and Dinesh Garg\inst{2}
 \and Shirish Shevade\inst{1} \and Y. Narahari\inst{1}}

\institute{Indian Institute of Science, Bangalore,\\
\and
IBM India Research Labs
}

\maketitle

\begin{abstract}
We study the problem of training an accurate linear regression model by procuring labels from multiple noisy crowd annotators, under a budget constraint. We propose a Bayesian model  for linear regression in crowdsourcing and use variational inference for parameter estimation. To minimize the number of labels crowdsourced from the annotators, we adopt an active learning approach. In this specific context, we prove the equivalence of well-studied criteria of active learning like entropy minimization and expected error reduction. Interestingly, we observe that we can decouple the problems of identifying an optimal unlabeled instance and identifying an annotator to label it. We observe a useful connection between the multi-armed bandit framework and the annotator selection in active learning. Due to the nature of the distribution of the rewards on the arms, we use the Robust Upper Confidence Bound (UCB) scheme with truncated empirical mean estimator to solve the annotator selection problem. This yields provable guarantees on the regret. We further apply our model to the scenario where annotators are strategic and design suitable incentives to induce them to put in their best efforts.
\end{abstract}








\section{Introduction}

Crowdsourcing platforms such as Amazon Mechanical Turk are becoming popular avenues for getting large scale human intelligence tasks executed at a much lower cost. In particular, they have been widely used
to procure labels to train learning models. 
These platforms are characterized by a large pool of diverse yet inexpensive annotators.
To leverage these platforms for learning tasks, the following issues need to be addressed:
(1) A learning model that encompasses parameter estimation and annotator quality estimation. 
(2) Identifying the best yet minimal set of instances from the pool of unlabeled data.
(3) Determining an optimal subset of annotators to label the instances.
(4) Providing suitable incentives to elicit best efforts from the chosen annotators under a budget constraint. 
We provide an end to end solution to address the above issues for a regression task.\par  
Identifying the best yet minimal set of instances to be labeled is important to minimize the generalization error, as the learner only has limited budget.
This involves selection of those unlabeled instances, the labels of which when fed to the learner, yield  maximum performance enhancement of the 
underlying model. The question of choosing an optimal set of unlabeled examples occupies center
stage in the realm of active learning. 
Past work on active learning in crowdsourcing apply to classification ~\cite{rodrigues2014,  RaykarAISTATS14} and most of these do not directly apply to regression where the space of labels is unbounded. For instance, the Markov Decision Processes (MDP) based method \cite{RaykarAISTATS14} 
relies on label space and thereby the state space being finite, which is not the case in regression.\par
Similar to the instance selection problem, the annotator choice to label an instance also has a bearing on the accuracy of the learnt model. Optimal annotator selection, in the context of classification, has been addressed using multi-armed bandit (MAB) algorithms \cite{AbrahamAKS13}.
Here the annotators are considered as the arms and their qualities as the stochastic rewards. In classification, the quality of the annotators is
modeled as a 
Bernoulli random variable, thereby making it suitable for application of algorithms such as UCB1 \cite{UCBAuer2002, Bubeck2012a}. However for regression tasks, the labels provided by the annotators
are naturally modeled to have Gaussian noise, the variance of which is a measure of the quality of the annotator. This 
in turn is a function of the effort put in. Therefore, optimal annotator set selection problem involves identifying annotators with low variance.
Though existing work  has adopted MAB algorithms for estimating variance \cite{Neufeld14} and several other applications \cite{Sen2015}, there is a research gap in its applicability to active learning and regression tasks
and in particular where heavy tailed distributions arise as a result of squaring the Gaussian noise. To bridge this gap, we invoke ideas from 
Robust UCB \cite{Bubeck2012b} and set up  theoretical guarantees for annotator selection in active learning. 

Another non-trivial challenge emerges when we are required to account for the  strategic behavior of the human agents.
An agent, in the absence of suitable incentives, may not find it beneficial to put in efforts while labeling the data.
To induce best efforts from agents, the learner could appropriately incentivize  
them. In the field of mechanism design, several incentive schemes exist \cite{Dayama2015, TranThanh2014}. To the best of our knowledge, 
such schemes have not been explored
in the context of active learning for regression. 
\\\\
\textbf{Contributions}:
The key contributions of this paper are as follows.\\
(1)\textbf{Bayesian model for Regression}: In Section \ref{sec:bayesian-lr}, we set up a novel   Bayesian model for regression using labels from multiple annotators with varying noise levels, which makes the problem  challenging. 
We use variational inference for parameter estimation
to overcome intractability issues.\\ 
(2)\textbf{Active learning for crowd regression and decoupling instance selection and annotator selection}: In Section \ref{sec:criteria-al}, we focus on various active learning criteria as applicable to the proposed regression model.  
Interestingly, in our setting, we show that the  criteria of {\em minimizing estimator error} and {\em minimizing estimator's entropy} 
are equivalent.
These criteria also remarkably enable us to  decouple the problems of
instance selection and annotator selection. \\
(3)\textbf{Annotator selection with multi-armed bandits}:
 In Section \ref{sec:al-annotator}, we describe the problem of selecting an
annotator having least variance. We establish an interesting connection of this problem to
the multi-armed bandit problem. In our formulation, we
work with the square of the label noise to cast the problem
into a variance minimization framework; the square of the
noise follows a sub-exponential distribution. We show that
standard UCB strategies based on $\psi$-UCB \cite{Bubeck2012a} are not applicable
and we propose the use of robust UCB \cite{Bubeck2012b} with truncated empirical mean. We show that the logarithmic regret bound of robust UCB is
preserved in this setting as well. Moreover the number of samples discarded is also logarithmic.\\
(4)\textbf{Handling strategic agents}: In Section \ref{sec:payment_strategic}, we consider the case of strategic annotators where the learner needs to induce them to put in their best efforts. For
this, we propose the notion of `\emph{quality compatibility}' and introduce a payment scheme that induces agents to put in their
best efforts and is also individually rational. \\
(5)\textbf{Experimental validation}: We describe our experimental findings in Section \ref{sec:experiments}. We compare the RMSE
and regret of our proposed models with state-of-the-art benchmarks on several real world datasets. Our experiments demonstrate a superior performance.
\section{Related Work}
A rich body of literature exists in the field of active learning for statistical models where labels are provided by a single source
\cite{Roeder12,Cohn1996,Burbidge2007,CaiICDM2013}. Popular techniques include minimizing the variance or uncertainty of the learner, 
query by committee schemes \cite{Seung1992}
and expected gradient length \cite{SettlesNIPS2007} to name a few.
In the literature on Optimal Experimental Design in Statistics,
the selection of most informative data instances is captured
by concepts such as A-optimality, D-optimality, etc. \cite{FedorovOED,varunThesis}. The idea is to construct confidence regions for the learner 
and bound these regions. 
A survey on active learning approaches for  various problems is presented in
 \cite{Settles10activelearning}.
\par The works that have looked into active learning for  regression are applicable only  for a single noisy source, and not to a  crowd. In crowdsourcing,
several learning models for regression have been proposed, for instance, \cite{ Raykar2012JMLR, Ristovski2010} obtain the maximum likelihood estimate 
(MLE) and maximum-a-posteriori (MAP) estimate respectively.
\cite{groot} proposes a scheme to aggregate information from multiple annotators for regression  
using Gaussian Processes. \cite{BiUAI2014, Raykar2009} develop models for classification using crowds. However, these do not employ techniques from active learning. Also, they do not obtain a posterior distribution 
over the parameters,
and hence do not perform probabilistic inference.
Of late, there have been a few crowdsourcing classification models employing the active learning paradigm \cite{rodrigues2014, Zhao13, WauthierNIPS2011,RaykarAISTATS14,Dekel2012}. 
These include uncertainty-based methods and MDPs.
To the best of our knowledge, active learning for regression using the crowds has not been looked at explicitly.
\par  When an annotator is requested to label an instance, and the annotator, being strategic, does not put
in the best effort, the learning algorithm could seriously underperform. So we must incentivize the annotator to
induce the best effort. 
Such studies are not reported in the current literature.
~\cite{papadimitriou, Dekel2010759} propose payment schemes for linear regression  
for crowds.  Both  ~\cite{papadimitriou, Dekel2010759} make the assumption that an instance is provided only to a single annotator
and also do not look at the active learning paradigm.
The idea in our work is to design incentives for active learning in the context of crowdsourced regression which would
induce the annotators to put in their best efforts.

In the next section, we explain our model for regression using the crowd, assuming non-strategic annotators.
\section{Bayesian Linear Regression from a Non-strategic Crowd}
\label{sec:bayesian-lr}
Given a data instance $\textbf{x}\in {\mathbb{R}}^d$, the linear regression model aims at predicting its 
label  
$y$ such that $y = \textbf{w}^\top \textbf{x}$. Instead of  
$\textbf{x}$, non-linear functions $\Phi(.)$ of $\textbf{x}$, 
can be used. To avoid notational clutter, we work with $\textbf{x}$ throughout this paper. The coefficient vector $\textbf{w} \in \mathbb{R}^d$ is unknown and training a linear regression model essentially 
involves finding $\textbf{w}$. Let  $\mathcal{D}$ be the initially procured training dataset and let $\mathcal{U}$ denote  the pool of unlabeled instances. We later (in Section \ref{sec:criteria-al}) select instances from $\mathcal{U}$ 
via active learning to enhance our model.
\par
In classical linear regression, the labels are assumed to be provided by a single noisy source.
In crowdsourcing, however, there are multiple annotators denoted by the set $S = \{1, \ldots, m\}$.
 Each of the annotators provides a label vector which we denote by $\textbf{y}_1, \ldots, \textbf{y}_m$,
where $\textbf{y}_j \in {\mathbb{R}}^n$ for $j = 1 ,\ldots, m$. 
Each annotator may or may not provide the label for every instance in the training set. We, therefore, define an indicator matrix 
$I \in \{ 0,1\}^{n \times m}$, where $I_{ij} = 1$ if annotator $j$ labels instance $\textbf{x}_i$, else $I_{ij} = 0$. 
We denote by $n_j$, the number of labels provided by annotator $j$. That is, $n_j = \sum_{i} I_{ij}$. We also define a matrix 
$\textbf{X}^j \in \mathbb{R}^{n_j \times d}$ whose rows contain the instances that are labeled by annotator $j$. Also, we denote by $y_{ij}$,
the label provided by annotator $j$ for $\textbf{x}_i$, which is the same as  $i^{\text{th}}$ element of the label vector $\textbf{y}_j$. 
The true label of a data instance $\textbf{x}_i$ is given by $\textbf{w}^\top\textbf{x}_i$. Each annotator $j$ introduces 
a Gaussian noise in the label he provides. That is,
$ y_{ij} \sim \mathcal{N}(\textbf{w}^\top\textbf{x}_i,\beta_j^{-1})$
where, $\beta_j$ is the precision or inverse variance of the distribution followed by $y_{ij}$. Intuitively, $\beta_j$ is directly proportional to the
effort put in by annotator $j$. We assume that there is always a maximum level of effort that annotator $j$ can put in and 
inverse variance corresponding to his best effort is given by $\beta_j^*$, which is unknown to the learner as well as other annotators.\\
In general, an annotator may be strategic and may exert a lower effort level $\beta_j < \beta_j^*$ if 
appropriate incentives are not provided. In this section, however, we adhere to the assumption that annotators are non-strategic and 
annotator $j$ always introduces a precision of $\beta_j^*$, thereby setting $\beta_j = \beta_j^*$. The parameters of the linear regression model
from crowds, therefore, become $\Theta = \{\textbf{w} ,  \beta_1 , \cdots, \beta_m  \}$. The aim of training a
linear regression model is to obtain estimates of  $\Theta$ using the training data $\mathcal{D}$. 
We now describe a Bayesian framework for this.

 \begin{figure}[h] 
        \centering
        \includegraphics[width = 0.5\textwidth]{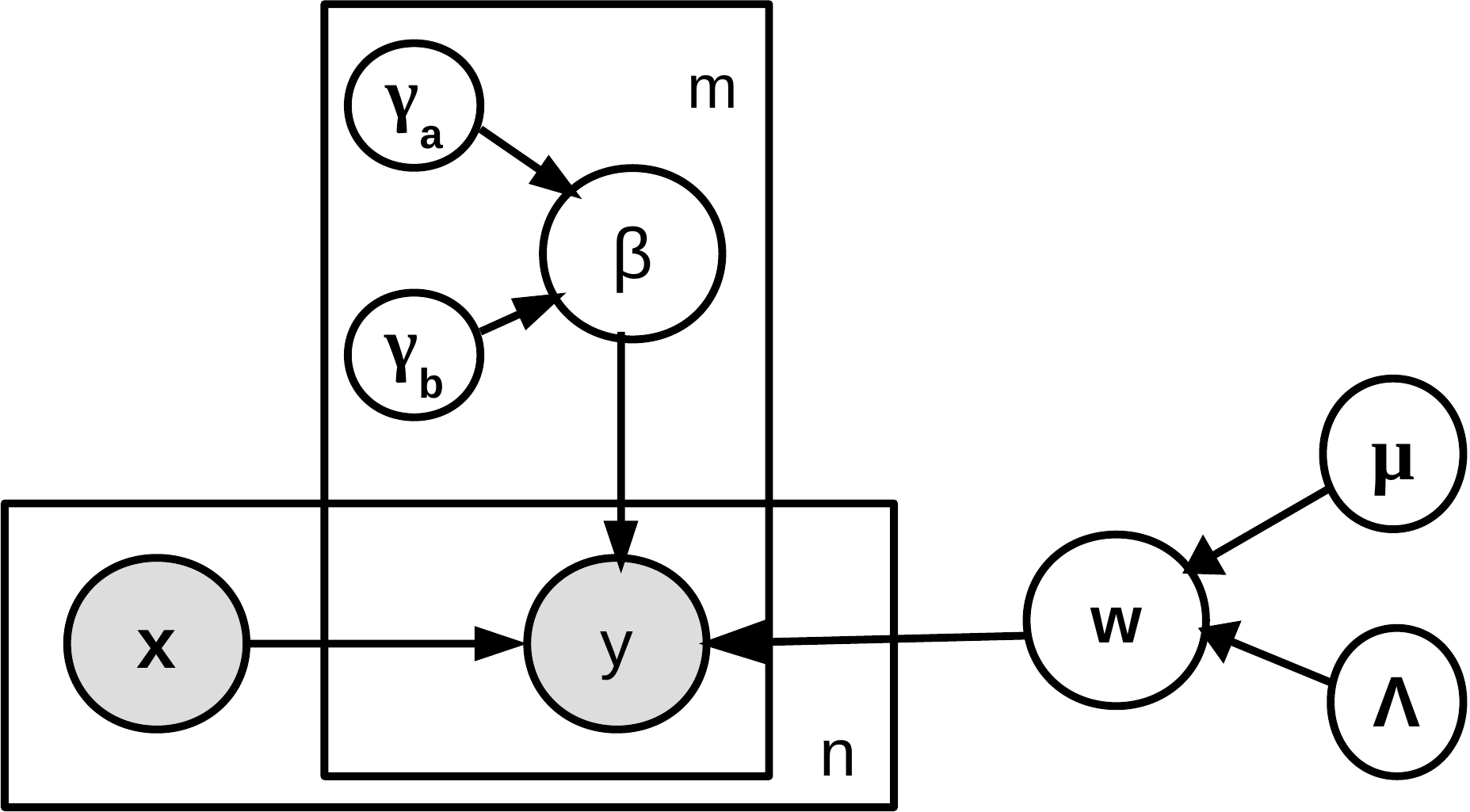}
        \caption{Plate notation for our model }
         \label{fig:plate-notation}
  \end{figure}
\noindent
\textbf{Bayesian Model and Variational Inference for Parameter Estimation}: 

 A Bayesian framework for parameter estimation is well suited for active learning as incremental learning can be done conveniently. 
 Bayesian framework has been developed for estimating the parameters of the linear regression model when labels of training data 
 are supplied by a single noisy source~\cite{prml_bishop}. To the best of our knowledge, the counterpart of such a Bayesian framework 
 in the presence of multiple annotators has not been explicitly explored. 
 \setlength{\intextsep}{0.2pt}
 We assume a Gaussian prior for $\textbf{w}$ with mean $\mu_0$ and precision matrix or inverse covariance 
matrix $\Lambda_0$.
We assume Gamma priors for $\beta_j$'s, that is, 
 $p(\textbf{w}) \sim \mathcal{N}(\mu_0, \Lambda_0^{-1}), 
 p(\beta_j) \sim \mathcal{G}(\gamma_{a0}^j, \gamma_{b0}^j) \text{ for } j=1,\ldots, m $.
 The plate notation of the Bayesian model described above is provided in Figure
\ref{fig:plate-notation}.
The computation of the posterior distributions
$p(\textbf{w} \mid \mathcal{D})$ and $p(\beta_j \mid \mathcal{D}) $ for $j= 1, \cdots, m$  is not tractable. Therefore, 
we appeal to variational approximation methods \cite{BealThesis2003Variational}. These methods approximate the posterior distributions using mean field assumptions.
We use $q(\bold{w})$ and $q(\beta_j)$ to represent the mean field variational approximation of  $p(\textbf{w}\mid\mathcal{D})$ and
$p(\beta_j \mid \mathcal{D})$ respectively.
The variational approximation begins by initializing the parameters of the prior distributions,
$\{ \mu_0, \Lambda_0\}$ and
$\{ \gamma_{a0}^j, \gamma_{b0} ^j\}$ for all $j =1, \cdots, m$.
At each iteration of the algorithm, the parameters of the posterior approximation are updated and the steps are repeated until convergence.

\begin{lemma}
The variational update rules for the posterior approximations using mean field assumptions are
$
 q(\bold{w}) \sim \mathcal{N}(\mu_n, \Lambda_n^{-1})
$ and $
 q(\beta_j) \sim \mathcal{G}( \gamma_{an}^{j}, \gamma_{bn}^{j} )
$
 where
 \allowdisplaybreaks
 \begin{align}
\Lambda_n &=  \left[\Lambda_0 + \sum\nolimits_j \mathbb{E}[\beta_j] \sum\nolimits_{{i : I_{ij} = 1}} \textbf{x}_i \textbf{x}_i^\top \right] \label{lambda_w_var_upd}\\
\mu_n &= \Lambda_n^{-1}\left[\Lambda_0 \mu_0 +  \sum\nolimits_j \mathbb{E}[\beta_j] \sum\nolimits_{{i : I_{ij} = 1}} y_{ij}\textbf{x}_i\right] \label{mu_w_var_upd}\\
\gamma_{an}^{j} &= \gamma_{a0}^{j} + {n_j}/{2} \label{gamma_a_var_upd}\\
\gamma_{bn}^{j} &= 
\gamma_{b0}^{j} + \frac{1}{2}  \sum\nolimits_{{i : I_{ij} = 1}} \left( y_{ij}^2 - y_{ij}\mu_n^ \top \bold{x}_i \right) \nonumber \\
 &\quad +  \frac{1}{2} Tr\left({\textbf{X}^j}^\top \textbf{X}^j\Lambda_n^{-1}\right) + \frac{1}{2}\mu_n^\top {\textbf{X}^j}^\top\textbf{X}^j\mu_n   \label{gamma_b_var_upd}
 \end{align}
\end{lemma}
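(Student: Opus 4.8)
The plan is to derive both update rules from the standard mean-field coordinate-ascent identity: under the factorization $q(\Theta)=q(\mathbf{w})\prod_j q(\beta_j)$, the optimal factor for each block equals, up to an additive constant, the expectation of the complete-data log-joint taken over all the remaining factors. I would first write
\[
\ln p(\mathcal{D},\Theta)=\ln p(\mathbf{w})+\sum_j\ln p(\beta_j)+\sum_{j}\sum_{i:I_{ij}=1}\ln\mathcal{N}\!\left(y_{ij}\mid\mathbf{w}^\top\mathbf{x}_i,\beta_j^{-1}\right),
\]
and then treat the two kinds of factors in turn.

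For $q(\mathbf{w})$, I would retain only the terms depending on $\mathbf{w}$ after taking the expectation over the $\beta_j$ factors. The Gaussian prior contributes $-\tfrac12(\mathbf{w}-\mu_0)^\top\Lambda_0(\mathbf{w}-\mu_0)$ and each likelihood term contributes $-\tfrac12\mathbb{E}[\beta_j](y_{ij}-\mathbf{w}^\top\mathbf{x}_i)^2$. Every surviving term is at most quadratic in $\mathbf{w}$, so $\ln q^*(\mathbf{w})$ is a quadratic form and $q(\mathbf{w})$ is therefore Gaussian. Reading off the coefficient of the quadratic term gives the precision $\Lambda_n$ of Eq.~\eqref{lambda_w_var_upd}, and completing the square on the linear term yields the mean $\mu_n$ of Eq.~\eqref{mu_w_var_upd}; here I would substitute $\mathbb{E}[\beta_j]=\gamma_{an}^j/\gamma_{bn}^j$, the mean of the current Gamma factor.

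For $q(\beta_j)$, I would similarly keep only the $\beta_j$-dependent terms after taking the expectation over $\mathbf{w}$. The Gamma prior contributes $(\gamma_{a0}^j-1)\ln\beta_j-\gamma_{b0}^j\beta_j$, while the $n_j$ Gaussian likelihood terms contribute $\tfrac{n_j}{2}\ln\beta_j-\tfrac{\beta_j}{2}\sum_{i:I_{ij}=1}\mathbb{E}_{\mathbf{w}}\!\left[(y_{ij}-\mathbf{w}^\top\mathbf{x}_i)^2\right]$. Since this has the form $(a-1)\ln\beta_j-b\beta_j$, the factor $q(\beta_j)$ is again Gamma, and matching the coefficient of $\ln\beta_j$ immediately gives the shape $\gamma_{an}^j$ of Eq.~\eqref{gamma_a_var_upd}. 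The rate $\gamma_{bn}^j$ of Eq.~\eqref{gamma_b_var_upd} follows once I expand the expected squared residual under $q(\mathbf{w})=\mathcal{N}(\mu_n,\Lambda_n^{-1})$; the only nonroutine identity is $\mathbb{E}[(\mathbf{w}^\top\mathbf{x}_i)^2]=\mu_n^\top\mathbf{x}_i\mathbf{x}_i^\top\mu_n+\mathrm{Tr}(\mathbf{x}_i\mathbf{x}_i^\top\Lambda_n^{-1})$, which after summing over $i$ yields the quadratic and trace terms, using $(\mathbf{X}^j)^\top\mathbf{X}^j=\sum_{i:I_{ij}=1}\mathbf{x}_i\mathbf{x}_i^\top$.

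The main obstacle is not any single calculation but the circular dependence between the two families of factors: $\Lambda_n$ and $\mu_n$ require $\mathbb{E}[\beta_j]$, whereas $\gamma_{bn}^j$ requires the first and second moments of $\mathbf{w}$ under $q(\mathbf{w})$. I would therefore stress that the displayed equations are the stationarity (fixed-point) conditions of the coordinate-ascent scheme, not closed-form values; the lemma is established by verifying that, holding the other factors fixed, each of these is exactly the factor maximizing the variational lower bound in its own coordinate, and the iterative procedure outlined before the lemma then cycles through the updates until convergence. Some care is also needed to check that the mean-field assumption makes the cross-terms between $\mathbf{w}$ and $\beta_j$ factor cleanly, so that the expectations in each update are taken consistently under the current $q(\mathbf{w})$ and $q(\beta_j)$.
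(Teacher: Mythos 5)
Your proposal follows essentially the same route as the paper: both invoke the mean-field identity $\ln q^*(\cdot)\propto \mathbb{E}_{-(\cdot)}[\ln p(\mathcal{D},\Theta)]$ (the paper derives it from the decomposition $\ln p(\mathcal{D})=\mathcal{L}(q)+KL(q\,\|\,p)$) and then keep the $\mathbf{w}$-dependent terms and complete the square to obtain $\Lambda_n$ and $\mu_n$; the paper omits the $\beta_j$ computation, which you correctly supply via the identity $\mathbb{E}[(\mathbf{w}^\top\mathbf{x}_i)^2]=\mu_n^\top\mathbf{x}_i\mathbf{x}_i^\top\mu_n+\mathrm{Tr}(\mathbf{x}_i\mathbf{x}_i^\top\Lambda_n^{-1})$. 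The only discrepancy is that your derivation gives the cross term as $\frac12\sum_i\left(y_{ij}^2-2y_{ij}\mu_n^\top\mathbf{x}_i\right)$, whereas the lemma as printed has $\frac12\sum_i\left(y_{ij}^2-y_{ij}\mu_n^\top\mathbf{x}_i\right)$ --- an apparent factor-of-two typo in the statement rather than a flaw in your argument.
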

\begin{proof}
If $p$ and $q$ denote the true and approximate posterior joint distributions of the parameters respectively, we know that,
$\ln p(\mathcal{D}) =  \mathcal{L}(q)+ KL(q \mid\mid p)$
, where, $
\mathcal{L}(q)= \int q(\Theta) \ln \left\lbrace \frac{p(\mathcal{D}, \Theta)}{q(\Theta)} \right\rbrace d\Theta
$ and $KL(q||p) = - \int q(\Theta) \ln \left[ \frac{p(\Theta \mid \mathcal{D})W}{q(\Theta)}\right] d\Theta
$ is the KL divergence between the distributions $q$ and $p$. 
By the mean field assumption, the joint distribution $q(\bold{w}, \beta_1, \cdots, \beta_m)$
factorizes as follows, $ q(\bold{w}, \beta_1, \cdots, \beta_m) $ $= q(\bold{w})\prod_{j=1}^{m} q(\beta_j)$.
For simplicity we denote by $q_{\bold{w}}$ the distribution $q(\bold{w})$ and by $q_{\bold{\beta_j}}$ the distribution
 $q(\beta_j)$.
\begin{align}
\label{eqn:l-q}
\mathcal{L}&(q) = \int q_{\bold{w}} \prod_{i=1}^m q_{\beta_j}
\left\lbrace \ln p(\mathcal{D}, \Theta) - \sum_{i \in {w,\beta_1,\cdots,\beta_m}} q_i \right\rbrace d\beta_j d\bold{w}\nonumber \\
&\propto \int q_\bold{w} \left \lbrace \int \ln  p(\mathcal{D}, \Theta)
\prod_{i=1}^{m} q_{\beta_i} d\bold{\beta_i} \right\rbrace d\bold{w} - \int q_{\bold{w}} \ln q_{\bold{w}} d\bold{w} \nonumber\\
& =\int q_\bold{w} \ln \tilde{p}(\mathcal{D}, \bold{w}) d\bold{w} - \int q_{\bold{w}} \ln q_{\bold{w}} d\bold{w}
\end{align}
where,
$\tilde{p}(\mathcal{D}, \bold{w}) = E_{\beta}
\left[\ln p(\mathcal{D}, \Theta) \right] + \;\;\text{constant}$ and
$\beta = \{\beta_1, \ldots, \beta_m\} $.
In order to minimize $KL(q || p)$, we must maximise
 $\mathcal{L}(q)$. Eqn (\ref{eqn:l-q}) shows that $\mathcal{L}(q)$
is the negative KL-divergence between $\tilde{p}(\mathcal{D}, \bold{w})$ and $q_{\bold{w}}$. $\mathcal{L}(q)$ is maximised 
when the KL-divergence between $\tilde{p}(\mathcal{D}, \bold{w})$ and $q_{\bold{w}}$ is minimized. Therefore, we must set
$q_{\bold{w}} =\tilde{p}(\mathcal{D}, \bold{w}) =  E_{\beta}
\left[\ln p(\mathcal{D}, \Theta) \right]$.
By similar calculations, we must set,
$q_{\beta_j} =\tilde{p}(\mathcal{D}, \beta_j) =  E_{\bold{w},\beta_{-j}}
\left[\ln p(\mathcal{D}, \Theta) \right]$, where
$\beta_{-j} = \{\beta_1, \cdots, \beta_{j-1}, \beta_{j+1},
\cdots, \beta_m\}$.
\allowdisplaybreaks
\begin{align*}
\log\; & q(\bold{w}) \propto E_{\beta} \left[ \log p(Y, \bold{w}, \beta\mid \textbf{X}, \bold{\delta}, \bold{\gamma})\right] 
\nonumber \\ &= E_{\beta} \left[ \log p( \bold{w}, \beta ) + \log p(Y\mid\textbf{X}, \bold{w}, \beta)\right] \nonumber \\
&=\log p(\bold{w} \mid \bold{\delta}) + E_\beta \left[ \log p(\beta\mid \gamma) \right] 
+  E_{\beta} \left[
\log p(Y\mid\textbf{X}, \bold{w}, \beta)\right]  \nonumber \\
&\propto \frac{1}{2\pi} | \Lambda_0 |- \frac{1}{2} (\bold{w} - \mu_0)^\top \Lambda_0  (\bold{w} - \mu_0) \\
&\;\;\;+ E_{\beta}\left[ \log \prod_{ij} \frac{\beta_j}{2\pi} \exp \left( -\frac{\beta_j (y_{ij} - \bold{w}^\top x_i)^2}{2} \right)\right] \nonumber \\
&\propto - \frac{1}{2} (\bold{w} - \mu_0)^\top \Lambda_0  (\bold{w} - \mu_0) \\
&\;\;\;+  E_{\beta}\left[ \sum_{ij}\log  \frac{\beta_j}{2\pi} - \left( \frac{\beta_j (y_{ij} - \bold{w}^\top x_i)^2}{2} \right)\right]  \nonumber \\
 &= - \frac{1}{2} (\bold{w} - \mu_0)^\top \Lambda_0  (\bold{w} - \mu_0) \\
 &\;\;\;+
  \sum_{ij}E_{\beta_j}\left[\log  \frac{\beta_j}{2\pi} - \left( \frac{\beta_j (y_{ij} - \bold{w}^\top x_i)^2}{2} \right)\right] \nonumber \\
  & \propto - \frac{1}{2} (\bold{w} - \mu_0)^\top \Lambda_0  (\bold{w} - \mu_0) \\
 & \;\;\;+
  \sum_{j}E_{\beta_j}\left[ - \left( \frac{\beta_j \sum_i (y_{ij} - \bold{w}^\top x_i)^2}{2} \right)\right] \nonumber \nonumber \\
  &= - \frac{1}{2} (\bold{w}^\top \Lambda_0\bold{w} - 2\bold{w}^\top \Lambda_0\mu_0 + \mu_0^\top \Lambda_0\mu_0 \\
  & \;\;\; +
 \sum_{ij} \frac{E_{\beta_j}\left[ \beta_j\right]}{2}(y_{ij}^2 + x_i^\top \bold{w}\bold{w}^\top x_i - 2y_{ij} \bold{w}^\top x_i)\nonumber \\
  &\propto - \frac{1}{2} \bold{w}^\top \left[\Lambda_0 + \left(\sum\nolimits_jE[\beta_j]  \sum\nolimits_{{i:I_{ij} = 1}}  \textbf{x}_i\textbf{x}_i^\top
  \right)\right]\bold{w} \\
  &\;\;\;
  + \bold{w}^\top \left[\Lambda_0\mu_0 - \frac{1}{2}  \sum\nolimits_jE[\beta_j]  \sum\nolimits_{{i:I_{ij} = 1}}  y_{ij}\bold{x}_i\right]
  \end{align*}
   By completing the squares we get the update rules for $\bold{w}$.
   The similar steps can be performed to get the variational updates for $\beta_j$. Due to constraints on space, we have not included the steps.
  \end{proof}

 The variational updates for $\mu_n$ and $\Lambda_n$ defined in  Eqns (\ref{lambda_w_var_upd}) and (\ref{mu_w_var_upd}) involve 
 $\mathbb{E}[\beta_j] = \gamma_{an}^{j}/\gamma_{bn}^{j} $.
 The updates for $\gamma_{bn}^{j}$ given in Eqn (\ref{gamma_b_var_upd}) 
 involve $\mu_n$ and $\Lambda_n$. This interdependency between the update
 equations leads to an iterative algorithm.
 \begin{remark}[Parameter Estimation]: Our approach is not tied to the variational inference approximation scheme. For example, MCMC can be used instead.
 \end{remark} 
 
  \begin{lemma}
\label{lemma:var_upd_aysmptotics}
\textbf{Asymptotic convergence of Bayes estimators}:
 Let $\textbf{w}^*$ be the true underlying value of $\textbf{w}$ and the Bayes estimator for $\textbf{w}$ under the least squares loss be $\mu_n$.
Then, $ \lim_{n \to \infty}  \mathbb{E}_{\mathcal{D}}[\mu_n] \rightarrow \textbf{w}^*$.
\end{lemma}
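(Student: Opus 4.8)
The plan is to work directly from the variational update for $\mu_n$ in Eqn~(\ref{mu_w_var_upd}) and exploit an exact algebraic cancellation between its two terms. First I would take the expectation $\mathbb{E}_{\mathcal{D}}[\cdot]$ over the randomness in the labels, conditioning on the instances $\textbf{x}_i$, the indicator matrix $I$, and the precision weights $\mathbb{E}[\beta_j]$. Since each label is generated as $y_{ij} \sim \mathcal{N}(\textbf{w}^{*\top}\textbf{x}_i, \beta_j^{-1})$, we have $\mathbb{E}[y_{ij}] = \textbf{x}_i^\top \textbf{w}^*$, so by linearity,
\begin{align*}
\mathbb{E}_{\mathcal{D}}[\mu_n] &= \Lambda_n^{-1}\left[\Lambda_0\mu_0 + \sum\nolimits_j \mathbb{E}[\beta_j]\sum\nolimits_{i:I_{ij}=1}\textbf{x}_i\textbf{x}_i^\top\textbf{w}^*\right].
\end{align*}

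The key step is to recognize, via Eqn~(\ref{lambda_w_var_upd}), that the weighted sum of outer products equals exactly $\Lambda_n - \Lambda_0$. Substituting this and simplifying yields $\mathbb{E}_{\mathcal{D}}[\mu_n] = \Lambda_n^{-1}\Lambda_0\mu_0 + \textbf{w}^* - \Lambda_n^{-1}\Lambda_0\textbf{w}^* = \textbf{w}^* + \Lambda_n^{-1}\Lambda_0(\mu_0 - \textbf{w}^*)$. Thus the expected estimator equals the truth plus a single prior-induced bias term that is premultiplied by $\Lambda_n^{-1}$. I would then argue this bias term vanishes: the matrix $\Lambda_n$ accumulates positive semidefinite contributions $\textbf{x}_i\textbf{x}_i^\top$ as more labels arrive, so under the standard informative-design assumption that $\lambda_{\min}(\Lambda_n) \to \infty$ as $n \to \infty$ (with the weights $\mathbb{E}[\beta_j]$ bounded away from $0$), we get $\|\Lambda_n^{-1}\Lambda_0\| \to 0$ and hence $\mathbb{E}_{\mathcal{D}}[\mu_n] \to \textbf{w}^*$.

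The main obstacle is the circular dependence of the precision weights $\mathbb{E}[\beta_j] = \gamma_{an}^{j}/\gamma_{bn}^{j}$ on $\mu_n$ and $\Lambda_n$ through Eqns~(\ref{gamma_a_var_upd})--(\ref{gamma_b_var_upd}): strictly, these weights are themselves data-dependent random quantities, so taking the label expectation while holding them fixed is not immediate. The way around this is that the exact cancellation producing the form $\textbf{w}^* + \Lambda_n^{-1}\Lambda_0(\mu_0 - \textbf{w}^*)$ holds for \emph{any} fixed positive weights, since the same factor $\mathbb{E}[\beta_j]$ enters both the numerator outer products and $\Lambda_n$. I would therefore carry the argument conditionally on the precision iterates (or equivalently treat them as converged, bounded, strictly positive values), which makes the conditioning harmless, and state the eigenvalue-growth and boundedness conditions on the design explicitly so that the final limit $\Lambda_n^{-1}\Lambda_0 \to 0$ is justified.
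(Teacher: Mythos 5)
Your proposal follows essentially the same route as the paper's proof: take the label expectation in the update for $\mu_n$, use Eqn~(\ref{lambda_w_var_upd}) to rewrite the weighted outer-product sum as $\Lambda_n - \Lambda_0$, obtain $\mathbb{E}_{\mathcal{D}}[\mu_n] = \textbf{w}^* + \Lambda_n^{-1}\Lambda_0(\mu_0 - \textbf{w}^*)$, and argue the prior-induced bias vanishes as $n \to \infty$. Your version is in fact slightly more careful on two points the paper glosses over --- you invoke $\lambda_{\min}(\Lambda_n) \to \infty$ rather than the paper's weaker condition $\det(\Lambda_n) \to \infty$ (which alone does not force $\Lambda_n^{-1}\Lambda_0 \to 0$), and you explicitly flag the data-dependence of the weights $\mathbb{E}[\beta_j]$ --- but the underlying argument is the same.
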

\begin{proof}
Let $\mu_n$ and $\Lambda_n$ be the mean and precision respectively, of the approximate posterior distribution $q(w)$ , estimated from the
training set $\mathcal{D}$. Let $\bold{w^*}$ be the realized value of the underlying $\bold{w}$.
\begin{align}
\label{lambda_n_exp}
 E_\mathcal{D}[\mu_n] &= \Lambda_n^{-1}(\Lambda_0 \mu_0 + \sum\nolimits_j E[\beta_j] \sum\nolimits_{{i:I_{ij} = 1}} \bold{x}_i E_\mathcal{D} [y_{ij}]) \nonumber\\
  &= \Lambda_n^{-1}(\Lambda_0 \mu_0 + (\Lambda_n - \Lambda_0)) \bold{w^*} \nonumber \\ 
  &= \bold{w}^* + \Lambda_n^{-1}\Lambda_0(\mu_0 - \bold{w}^*)
\end{align}
If the second term in Eqn \ref{lambda_n_exp} approaches $0$ as $n \to \infty$, the estimate $\mu_n$ is an asymptotically unbiased
estimate for $\bold{w}$.
Using standard linear algebra results, we can prove that the determinant of the precision matrix $\det(\Lambda_n)$ approaches $\infty$ with
large number of samples, that is, $\lim_{n \rightarrow \infty} \det(\Lambda_n) \rightarrow \infty$. Hence the second term in 
Eqn \ref{lambda_n_exp} approaches 
zero. Therefore $\lim_{n \rightarrow \infty} E_\mathcal{D}[\mu_n] \rightarrow \bold{w}^* $.
\end{proof}
Lemma \ref{lemma:var_upd_aysmptotics} is a desirable property of the estimators, and in general
holds true for Bayes estimators. 

\subsubsection*{Inference:}
\label{sec:inference} 
We now describe an inference scheme to make prediction about the label of a test data instance.
We denote by $\widehat{y}_{\text{test}}$ the predicted label for the test instance $\textbf{x}_{\text{test}}$.
From the Bayesian framework of parameter estimation, 
The posterior predictive distribution for $\widehat{y}_{\text{test}}$ turns out to be as follows: $p(\widehat{y}_{\text{test}}\mid \textbf{x}_{\text{test}}, \mathcal{D})
\sim \mathcal{N}(\textbf{x}_{\text{test}}^\top\mu_n, \textbf{x}_{\text{test}}^\top \Lambda_{n}^{-1} \textbf{x}_{\text{test}})$.
This follows from standard results in \cite{prml_bishop}.
We can use this distribution later in scenarios like active learning.

\section{Active Learning for Linear Regression from the Crowd}
\label{sec:seq-estimation}
We now discuss various active learning  \cite{Settles10activelearning} strategies in our framework.
Let $\mathcal{U}$ be the set of unlabeled instances.
The goal is to identify an instance, say $\textbf{x}_k \in \mathcal{U}$, for which seeking a label and retraining the model with this
additional training example will improve the model in terms of the generalization error. In the crowdsourcing context, since multiple annotators are involved, 
we also need to identify the annotator $t$ from whom we should obtain the label for $\textbf{x}_k$. The active learning criterion, thus,
involves finding a pair $(k,t)$ so that retraining with the new labeled set $\mathcal{D} \cup \{(\textbf{x}_k, y_{kt})\}$ would provide maximum improvement 
in the model.
\subsection{Instance Selection}
\label{sec:criteria-al}
 To our crowdsourcing model, we now apply  two   criteria well-studied in active learning from a single source.  
 We also show that all these seemingly different criteria embody the same logic. 
\subsubsection{Minimizing Estimator Error}
\label{point:estimator_error}
Minimizing estimator error is a natural criterion for active learning  \cite{Roy2001}. 
The error in the estimator $\mu_{n+1}$, if we choose a pair $(k,t)$, is given by,
$\text{Err}(\mu_{n+1}) = \mathbb{E}_{y_{kt}} \left[\mu_{n+1} \right] - \textbf{w} $. The error in the estimator $\mu_{n}$, before including the instance $(\textbf{x}_k, y_{kt})$ in the training set is,
$\text{Err}(\mu_{n}) = \mu_{n} - \textbf{w} $.
\begin{lemma}
\label{lemma:min-est-error}
The relation between errors in $\mu_{n+1}$ and $\mu_n$ is given by,
\begin{eqnarray}
\left \Vert \text{Err}(\mu_n)\right \Vert /(1 + \beta_t \textbf{x}_k^\top \Lambda_n^{-1}\textbf{x}_k )\leq \left \Vert \text{Err} (\mu_{n+1}) \right \Vert  \leq  \left \Vert \text{Err}(\mu_n)\right \Vert
  \label{mu_w_error_bound}
\end{eqnarray}
\end{lemma}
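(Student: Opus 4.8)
The plan is to reduce the whole statement to controlling a single linear operator and then bounding its spectrum. First I would write the incremental update explicitly. Adding the single pair $(\mathbf{x}_k, y_{kt})$ from annotator $t$ turns the precision update (\ref{lambda_w_var_upd}) into the rank-one perturbation $\Lambda_{n+1} = \Lambda_n + \beta_t \mathbf{x}_k \mathbf{x}_k^\top$, and, since $\Lambda_0\mu_0 + \sum_j \mathbb{E}[\beta_j]\sum_{i:I_{ij}=1} y_{ij}\mathbf{x}_i = \Lambda_n\mu_n$ by (\ref{mu_w_var_upd}), the mean update becomes $\mu_{n+1} = \Lambda_{n+1}^{-1}\big(\Lambda_n\mu_n + \beta_t y_{kt}\mathbf{x}_k\big)$. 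Taking the expectation over $y_{kt}\sim\mathcal{N}(\mathbf{w}^\top\mathbf{x}_k,\beta_t^{-1})$ replaces $y_{kt}$ by $\mathbf{x}_k^\top\mathbf{w}$; substituting $\Lambda_{n+1}\mathbf{w} = \Lambda_n\mathbf{w} + \beta_t\mathbf{x}_k\mathbf{x}_k^\top\mathbf{w}$ into $\mathbb{E}_{y_{kt}}[\mu_{n+1}]-\mathbf{w}$ then makes the rank-one terms cancel, leaving the clean recursion
\begin{equation*}
\mathrm{Err}(\mu_{n+1}) = \Lambda_{n+1}^{-1}\Lambda_n\,\mathrm{Err}(\mu_n).
\end{equation*}
Everything now hinges on the matrix $M := \Lambda_{n+1}^{-1}\Lambda_n$.

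Next I would compute the spectrum of $M$. Applying the Sherman--Morrison identity to $(\Lambda_n + \beta_t\mathbf{x}_k\mathbf{x}_k^\top)^{-1}$ gives $M = I - c\,\Lambda_n^{-1}\mathbf{x}_k\mathbf{x}_k^\top$ with $c = \beta_t/(1+\beta_t g)$ and $g := \mathbf{x}_k^\top\Lambda_n^{-1}\mathbf{x}_k \ge 0$. Since $\Lambda_n^{-1}\mathbf{x}_k\mathbf{x}_k^\top$ has rank one with nonzero eigenvalue $g$ (eigenvector $\Lambda_n^{-1}\mathbf{x}_k$), the eigenvalues of $M$ are $1$ with multiplicity $d-1$ and $1-cg = 1/(1+\beta_t g)$, all lying in the interval $\big[\,1/(1+\beta_t\mathbf{x}_k^\top\Lambda_n^{-1}\mathbf{x}_k),\,1\,\big]$. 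These two endpoints are exactly the factors appearing in (\ref{mu_w_error_bound}).

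The main obstacle is that $M$ is \emph{not} symmetric, so the eigenvalue bounds do not immediately bound $\|M\,\mathbf{e}\|$; in fact the inequalities can fail in the plain Euclidean norm when $\Lambda_n$ is ill-conditioned, so the natural measure of error here is a $\Lambda_n$-weighted norm. A short symmetrization handles this: conjugating by $\Lambda_n^{1/2}$, the matrix $\tilde M := \Lambda_n^{1/2}M\Lambda_n^{-1/2} = \Lambda_n^{1/2}\Lambda_{n+1}^{-1}\Lambda_n^{1/2}$ is symmetric positive definite and similar to $M$, hence shares its eigenvalues; its operator norm and smallest singular value are therefore $1$ and $1/(1+\beta_t g)$. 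Writing $\mathbf{f} := \Lambda_n^{1/2}\,\mathrm{Err}(\mu_n)$, the recursion reads $\Lambda_n^{1/2}\mathrm{Err}(\mu_{n+1}) = \tilde M\,\mathbf{f}$, and the spectral bounds for the symmetric $\tilde M$ give $\|\mathbf{f}\|/(1+\beta_t g) \le \|\tilde M\,\mathbf{f}\| \le \|\mathbf{f}\|$.

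Finally I would translate this back: measured in the $\Lambda_n$-weighted norm (equivalently, noting that $M$ is self-adjoint with respect to both the $\Lambda_n$- and $\Lambda_{n+1}$-inner products), the chain of inequalities is precisely $\|\mathrm{Err}(\mu_n)\|/(1+\beta_t\mathbf{x}_k^\top\Lambda_n^{-1}\mathbf{x}_k) \le \|\mathrm{Err}(\mu_{n+1})\| \le \|\mathrm{Err}(\mu_n)\|$, which is (\ref{mu_w_error_bound}). I expect the spectrum computation and the cancellation in the first step to be routine; the only genuinely delicate point is the passage from eigenvalues of the non-symmetric $M$ to norm bounds, which the symmetrization resolves.
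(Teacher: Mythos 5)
Your derivation of the error recursion coincides with the paper's: both arguments arrive at $\text{Err}(\mu_{n+1}) = (I+\beta_t\Lambda_n^{-1}\textbf{x}_k\textbf{x}_k^\top)^{-1}\text{Err}(\mu_n)$ (your $\Lambda_{n+1}^{-1}\Lambda_n$ is exactly this matrix), and both read off the eigenvalues $1$ (multiplicity $d-1$) and $1/(1+\beta_t\textbf{x}_k^\top\Lambda_n^{-1}\textbf{x}_k)$ of the rank-one perturbation. Where you diverge is the final step, and your instinct there is correct: the paper simply asserts that the spectral norm of this matrix equals $1$ and that its minimum eigenvalue supplies the lower bound, but the matrix is not symmetric (nor normal), so its eigenvalues do not control its Euclidean operator norm; for ill-conditioned $\Lambda_n$ one can have $\left\Vert (I+\beta_t\Lambda_n^{-1}\textbf{x}_k\textbf{x}_k^\top)^{-1}\right\Vert_2 > 1$, and both inequalities of the lemma can fail in the plain Euclidean norm. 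Your conjugation by $\Lambda_n^{1/2}$ is a genuine repair of this step, but note the trade: it rigorously delivers the two-sided bound only after reinterpreting $\Vert\cdot\Vert$ as the $\Lambda_n$-weighted norm $\Vert v\Vert_{\Lambda_n}=\Vert\Lambda_n^{1/2}v\Vert$, so you have proved a corrected variant of the lemma rather than the statement as literally written. Since the lemma is used only to motivate maximizing the score $\beta_t\textbf{x}_k^\top\Lambda_n^{-1}\textbf{x}_k$, the weighted-norm version serves that purpose just as well; but you should state the choice of norm explicitly, because without it the claim reverts to the Euclidean version, which neither your argument nor the paper's actually establishes.
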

\begin{proof}

We first compute $E_{y_{kt}}[ \mu_{n+1}]$. 
\begin{align}
E_{y_{kt}} \left[ \Lambda_{n+1}\mu_{n+1} \right] &= \Lambda_{n+1} E_{y_{kt}} \left[\mu_{n+1}\right] \nonumber  \\
&= \Lambda_n\mu_n + \textbf{x}_k (\textbf{x}_{k}^\top\textbf{w}  )\beta_t
\end{align}
Making necessary substitutions and rearranging the terms, 
\begin{align*}
 E_{y_{kt}} \left[\mu_{n+1}\right] -\mu_n  = - \Lambda_n^{-1}\textbf{x}_k  \textbf{x}_k ^\top \text{Err}(\mu_{n+1}) \beta_t
\end{align*}
Again rearranging the terms and subtracting $\textbf{w}$ from both the sides yields,
%
$\text{Err}(\mu_{n+1})  = \left(I + \Lambda_n^{-1}\textbf{x}_k  \textbf{x}_k ^\top \beta_t \right)^{-1} \text{Err}(\mu_n)$.
We now bound  $\text{Err}(\mu_{n+1})$,  in terms of the old error, 
$\text{Err}(\mu_n)$ as follows:
$
\left \Vert \text{Err} (\mu_{n+1}) \right \Vert \leq \left \Vert(I + \Sigma_{n} \textbf{x}_k \textbf{x}_k^\top \beta_t)^{-1}\right \Vert \left \Vert \text{Err}(\mu_n)\right \Vert
$
where, $\left \Vert(I + \Lambda_n^{-1} \textbf{x}_k \textbf{x}_k^\top \beta_t)^{-1}\right \Vert$ is the spectral norm of the matrix 
$(I + \Lambda_n^{-1} \textbf{x}_k \textbf{x}_k^\top \beta_t)^{-1}$. Since $\Lambda_n^{-1} \textbf{x}_k \textbf{x}_k^\top$ is a rank one matrix, 
the matrix $I + \Lambda_n^{-1} \textbf{x}_k \textbf{x}_k^\top \beta_t$ has $d-1$ eigenvalues equal to 1 and one eigenvalue equal to 
$1 + \beta_t \textbf{x}_k^\top \Lambda_n^{-1} \textbf{x}_k$. Note, $ \textbf{x}_k^\top \Lambda_n^{-1} \textbf{x}_k > 0$ since $\Lambda_n^{-1}$ is a
positive definite matrix. Therefore, spectral norm of the matrix $(I + \Lambda_n^{-1} \textbf{x}_k \textbf{x}_k^\top \beta_t)^{-1} $ is $1$ and
its minimum eigenvalue is $1/(1 + \beta_t \textbf{x}_k^\top \Lambda_n^{-1} \textbf{x}_k)$ and we  arrive at the error bound.
\end{proof}
From Theorem\nobreakspace \ref {lemma:min-est-error}, it is clear that to reduce the value of the lower bound, we must pick a pair $(k,t)$  for which the score 
$\beta_t \textbf{x}_k^\top \Lambda_n^{-1} \textbf{x}_k$ is maximum. 
\subsubsection{Minimizing Estimator's Entropy}
\label{point:estimator_entropy}
This is another natural criterion for active learning which suggests that the entropy of the estimator after adding an
example should decrease  \cite{Lindley:OnMeasureInformationExperiment,MacKay_information-basedobjective}. Formally, 
let $H(\textbf{w} \mid \mathcal{D})$ and $H(\textbf{w}\mid \mathcal{D'})$ denote the entropies of the estimator before and after adding an example, respectively, 
where we have $\mathcal{D'} = \mathcal{D} \cup \{(\textbf{x}_k, y_{kt})\}$. Again, let us assume $\beta_j$'s are known for the time being. 
The entropy of the distribution before adding an example satisfies: $H(\textbf{w}\mid \mathcal{D}) \propto \det( \Lambda_n^{-1})$. After adding the example, entropy function behaves as follows.
$H(\textbf{w}\mid \mathcal{D'}) \propto  \det(\Lambda_{n+1}^{-1})$, where 
\begin{align}
\label{entropy-exact-relation}
\det(\Lambda_{n+1}^{-1}) &= \det(\Lambda_n^{-1})/(1 + \beta_t \textbf{x}_k^\top \Lambda_n^{-1} \textbf{x}_k)
\end{align}
From  (\ref{entropy-exact-relation}), we would like to
choose an instance $\textbf{x}_k$ and an annotator $t$ that jointly maximize $ \beta_t \textbf{x}_k^\top \Lambda_n^{-1} \textbf{x}_k$
so that $\det(\Lambda_{n+1}^{-1})$ as well as estimator's entropy are minimized. Recall, the same selection strategy was obtained while using the
 {\em minimize estimator error} criterion. 
Let $\lambda^{*}=\lambda_{\text{max}}( \Lambda_n^{-1})$ and $\lambda_{*}=\lambda_{\text{min}}( \Lambda_n^{-1})$. We can further bound the estimator precision as follows. 
\[ 1/(1 + \beta_t  \lambda^{*}\left \Vert\textbf{x}_k\right \Vert^2 ) \leq \det(\Lambda_{n+1}^{-1})/\det(\Lambda_{n}^{-1}) \leq 1/(1 + \beta_t  \lambda_{*}\left \Vert\textbf{x}_k\right \Vert^2)  \]
We observe that the selection of the best instance $\textbf{x}_k$ and the best annotator $S_t$ can be decoupled. That is, we can first select an instance 
$\textbf{x}_k$ for which $\textbf{x}_k^\top \Lambda_n^{-1} \textbf{x}_k$ is maximum and independently select an annotator for whom $\beta_t$ is maximum.
But this scheme of annotator selection
may lead to starvation of best annotators if 
the annotators have not been explored sufficiently.  Hence we only use
this strategy for selecting an instance and not for selecting the annotator. 
\subsection{Selection of an Annotator}
 \label{sec:al-annotator}
 Having chosen the instance $\textbf{x}_k$, next the learner must decide which annotator should label it.  Consider any arbitary sequential selection algorithm $\emph{A}$ for the annotators.
 If the variance of the annotators' labels were known upfront, the best strategy would be to always select the annotator introducing  the minimum variance 
$1/\beta^* = \min_{1\leq j \leq m} 1/\beta_j$.  The variances
of the annotators' labels are unknown and hence a sequential selection algorithm $\emph{A}$
incurs a regret defined by Regret-Seq($\emph{A}$) below. 
We denote
the sub-optimality of annotator $j$ by $\Delta_j = (1/\beta_j) - (1/\beta^*)$.

\begin{definition}{\textbf{Regret-Seq(A,{\tiny{ }}t):}}
 If $T_j(t)$ is the number of times annotator $j$ is selected in
$t$ runs of $\emph{A}$, the expected regret of $\emph{A}$ in $t$ runs, with respect to the choice of annotator, is computed as, 
 $\text{Regret-Seq}(\emph{A},t) = \sum_{j=1}^{m} \Delta_j \mathbb{E}[T_j(t)]$.
\end{definition}
The problem is to formally establish an annotator selection strategy which yields a regret as low as possible. 
The main challenge is that the annotators' noise level is 
unknown and must be estimated simultaneously while also deciding on the selection strategy. 
We observe the connections of this problem to the multi-armed bandit (MAB) problem. 
In MAB problems, there are $m$ arms each producing rewards from fixed distributions
$P_1, \cdots, P_m$ with unknown means $\gamma_1, \cdots, \gamma_m$. The goal is to maximise the overall reward and for this, at every time-step a decision has to be made as to which arm must be pulled.
We denote the sub-optimality of arm $i$ by $\Delta_i^{\text{MAB}} = \gamma_* - \gamma_i$, where $\gamma^* = \max_{1 \leq i \leq m} \gamma_i$. 

\begin{definition}{\textbf{Regret-MAB(M,{\tiny{ }}t):}}
 If $T_i(t)$ is the number of times arm $i$ is selected in
$t$ runs of any MAB algorithm $M$, the expected regret of $M$ in $t$ runs, Regret-MAB($M , t$), is computed as,
 $\text{Regret-MAB}(M, t) = \sum_{i=1}^{m} \Delta_i^{\text{MAB}} \mathbb{E}[T_i(t)]$.
\end{definition}
We now show that the active learning problem in crowdsourcing regression tasks can be mapped to the MAB problem. 
We know that,
$\mathbb{E}[(y_{kj}-(\textbf{w}^\top \textbf{x}_k))^2] =  1/\beta_j$. 
Since we are interested in the annotator introducing the minimum variance,
we could work with a MAB framework where the rewards of the arms (annotators in our case) are drawn from the distribution of $-(y_{kj}-(\textbf{w}^\top \textbf{x}_k))^2$.
This idea was used in \cite{Neufeld14} in the context of sequential selection from a pool of Monte Carlo estimators. 
If the selection strategy $A$ appeals to  any MAB algorithm $M$ defined on the
distributions $-(y_{kj}-(\textbf{w}^\top \textbf{x}_k))^2$,  Regret-MAB($M, t$) will be the same as Regret-Seq($A, t$), as proved
by \cite{Neufeld14}.
This implies that for the selection strategy, 
we could work with any standard MAB algorithm such as UCB on the distribution of $-(y_{kj}-(\textbf{w}^\top \textbf{x}_k))^2$ and 
 $\text{Regret-Seq(\emph{A}, \emph{t})}$ would be the same as $\text{Regret-MAB(\emph{M}, \emph{t})}$, for an appropriately formulated MAB algorithm $\emph{M}$.
 \subsubsection{UCB Algorithm on $-(y_{kj}-(\textbf{w}^\top \textbf{x}_k))^2$}
As mentioned, we can work with MAB algorithms on $-(y_{kj}-(\textbf{w}^\top \textbf{x}_k))^2$ for which we look at the widely used UCB family of MAB algorithms. The UCB algorithm is an index based scheme which, at time instant $t$ selects an arm $i$ that has the maximum value of sum of the estimated mean 
($\hat{\gamma_i}$) and a carefully designed confidence interval $c_{i,t}$ to provide desired guarantees.  
To design the UCB confidence interval $c_{i,t}$, a fairly general class of algorithms called $\psi$-UCB \cite{Bubeck2012a} can be used.
The procedure for applying $\psi$-UCB for a random variable $G$ with some arbitrary distribution,
involves choosing a convex function $\psi_G(\lambda)$, such that, $\ln \mathbb{E}[\exp(|\lambda(G - \mathbb{E}[G])|] 
\leq \psi_G(\lambda)$ for all $\lambda \geq 0$. Further, an application of Chernoff bounds gives the confidence interval. 
In particular when $G$ satisfies the sub-Gaussian property, the choice of $\psi_G(\lambda)$ is easy. 
In our setting, we will see that $\psi$-UCB
is inapplicable.
\begin{lemma}{Inapplicability of $\psi$-UCB:}
\label{lemma:psi-ucb-not-possible}
Let the distribution of random variables $G_j$ follow a zero-mean normal distribution for $j= 1, \cdots, m$. 
The distribution of $-G_j^2$ is sub-exponential 
which is a heavy-tailed distribution. For an MAB framework where the rewards of the 
arms are sampled from $-G_j^2$,  $\psi$-UCB is not applicable.
\end{lemma}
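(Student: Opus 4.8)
The plan is to exhibit, for each arm, a value of $\lambda$ beyond which the quantity $\ln \mathbb{E}[\exp(|\lambda(G - \mathbb{E}[G])|)]$ that $\psi$-UCB must dominate becomes infinite, so that no finite convex function $\psi_G(\lambda)$ can satisfy the defining inequality over the whole range $\lambda \geq 0$. Writing $G_j \sim \mathcal{N}(0, \sigma_j^2)$ with $\sigma_j^2 = 1/\beta_j$ and $Z_j = -G_j^2$, I would first record that $\mathbb{E}[Z_j] = -\sigma_j^2$, so the centered variable is $\tilde{Z}_j = \sigma_j^2 - G_j^2$, and for $\lambda \geq 0$ we have $|\lambda \tilde{Z}_j| = \lambda|\sigma_j^2 - G_j^2|$. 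The core object to analyze is therefore the moment generating function $\mathbb{E}[\exp(\lambda|\sigma_j^2 - G_j^2|)]$.

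Next I would split this expectation at the event $\{G_j^2 \le \sigma_j^2\}$. On the region $G_j^2 \le \sigma_j^2$ the integrand is bounded by $e^{\lambda \sigma_j^2}$ and contributes a finite amount. The right tail contributes $e^{-\lambda\sigma_j^2}\,\mathbb{E}[\exp(\lambda G_j^2)\,\mathbbm{1}\{G_j^2 > \sigma_j^2\}]$, and here I would invoke the scaled chi-square moment generating function: for $G_j \sim \mathcal{N}(0,\sigma_j^2)$ one has $\mathbb{E}[\exp(s G_j^2)] = (1 - 2s\sigma_j^2)^{-1/2}$ when $s < 1/(2\sigma_j^2)$ and $+\infty$ otherwise, since the Gaussian density decays like $\exp(-g^2/(2\sigma_j^2))$ and the product $\exp((s - 1/(2\sigma_j^2))g^2)$ fails to be integrable once $s \ge 1/(2\sigma_j^2)$. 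Consequently $\mathbb{E}[\exp(\lambda|\sigma_j^2 - G_j^2|)] = +\infty$ for every $\lambda \ge 1/(2\sigma_j^2) = \beta_j/2$. This is exactly the heavy-tailed, sub-exponential behaviour claimed: the log-MGF of the centered reward is finite only on the bounded interval $[0, \beta_j/2)$ and blows up beyond it.

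Finally I would conclude the inapplicability. The $\psi$-UCB recipe demands a convex $\psi_{G}(\lambda)$ with $\ln \mathbb{E}[\exp(|\lambda(Z_j - \mathbb{E}[Z_j])|)] \le \psi_{G}(\lambda)$ for all $\lambda \ge 0$, after which Chernoff tail bounds and the usual optimization over $\lambda$ produce the confidence width $c_{i,t}$. But by the previous step the left-hand side equals $+\infty$ for all $\lambda \ge \beta_j/2$, so no real-valued (hence no finite convex) $\psi_{G}$ can dominate it on all of $[0,\infty)$; the construction underlying $\psi$-UCB simply has no valid input. I expect the main technical care to lie not in the divergence computation itself but in handling the absolute value correctly: one must verify that the divergence is genuinely driven by the right tail $G_j^2 \gg \sigma_j^2$ (and is not neutralized by the bounded left region) and that the relevant threshold is the two-sided $\lambda \ge 1/(2\sigma_j^2)$ rather than the one-sided range on which the uncentred $\mathbb{E}[\exp(\lambda Z_j)]$ alone stays finite. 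This distinction, namely that $\mathbb{E}[\exp(\lambda Z_j)]$ is finite for every $\lambda \ge 0$ while the absolute-value version is not, is precisely what makes the statement subtle and motivates the subsequent move to robust UCB with a truncated empirical mean estimator.
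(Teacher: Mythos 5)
Your proof is correct and follows essentially the same route as the paper's: both hinge on the fact that $\mathbb{E}[\exp(\lambda G_j^2)]$ diverges for $\lambda \ge 1/(2\sigma_j^2)$, so no finite convex $\psi$ can dominate the required log-moment-generating function on all of $[0,\infty)$. The only difference is cosmetic: you work with the absolute-value form of the condition directly (splitting the integral at $G_j^2 = \sigma_j^2$ and isolating the divergent right tail), whereas the paper computes the two one-sided centered moment generating functions separately and observes that the upward one, $\mathbb{E}[\exp(\lambda(G_j^2 - \sigma_j^2))] = e^{-\lambda\sigma_j^2}/\sqrt{1-2\lambda\sigma_j^2}$, is undefined beyond that same threshold --- an equivalent obstruction.
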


\begin{proof}

A variable $G$ is sub-exponential if $\mathbb{E}[\exp(\lambda G)] \leq 1/(1-\lambda/a)$ for $0 < \lambda < a$.
We now prove that the random variable $G^2$, where $G \sim \mathcal{N}(0,\sigma^2)$ is sub-exponential.
\begin{align}
  \mathbb{E}&[\exp(\lambda G^2)] = 1/(\sigma \sqrt{2 \pi}) \int_{-\infty}^{\infty} \exp(z^2 (\lambda - (1/2\sigma^2))) dz \\
  & = 1/(\sigma \sqrt{2 \pi}) \int_{-\infty}^{\infty} \exp(-z^2 ((1-2\lambda\sigma^2 /2\sigma^2))) dz \\
  &= 1/(\sigma \sqrt{2 \pi}) \int_{-\infty}^{\infty} \exp(-z^2 /(2\sigma^2/(1-2\lambda\sigma^2 ))) dz \\
  & = 1/\sqrt{1-2\lambda\sigma^2} \\
  & < 1/(1-2 \lambda \sigma^2)  \text{ for } 0\leq \lambda < 1/2\sigma^2
\end{align}
Setting $ a = 1/\sigma^2$ shows that $G$ is sub-exponential. A random variable $-G^2$ is sub-exponential iff $G^2$ is sub-exponential.
Therefore $-G^2$ is sub-exponential. 

 Let $G_j \sim \mathcal{N}(0,\sigma_j^2)$. We now compute the functions, $\mathbb{E}[\exp(\lambda(-G_j^2 + \mathbb{E}[G_j^2]))]$ and 
 $\mathbb{E}[\exp(\lambda( \mathbb{E}[-G_j^2] + (G_j^2 )))]$.
 $\mathbb{E}[G_j^2] = \sigma_j^2$.
\begin{align*}
 \mathbb{E}&[\exp(\lambda(-G_j^2 + \mathbb{E}[G_j^2]))] = \mathbb{E}[\exp(\lambda(-G_j^2 + \sigma_j^2))] \\
 &= \frac{\exp(\lambda\sigma_j^2)}{\sigma_j\sqrt{2\pi}} \int_{-\infty}^{\infty} \exp(-\lambda x^2) \exp(\dfrac{-x^2}{2\sigma_j^2}) dx \\
 & =  \exp(\lambda\sigma_j^2)/(\sigma_j\sqrt{2\pi}) \int_{-\infty}^{\infty} \exp(-x^2 / 2(\sigma_j^2/(1+ 2\lambda \sigma_j^2))) dx\\
 &= \exp(\lambda \sigma_j^2)/\sqrt{1 + 2\lambda\sigma_j^2}
\end{align*}
Similar calculations also yield,
\begin{align*}
 \mathbb{E}&[\exp(\lambda(G_j^2 - \mathbb{E}[G_j^2]))] 
 = \exp(-\lambda \sigma_j^2)/\sqrt{1 -2\lambda\sigma_j^2}
\end{align*}

In order to apply $\psi$-UCB for the MAB framework where the rewards of the  arms are sampled from $-G_j^2$, we need to compute a function
$\psi(\lambda)$ such that for all $\lambda \geq 0$,  $ \ln \mathbb{E}[\exp(\lambda(G_j^2 - \mathbb{E}[G_j^2]))] \leq \psi(\lambda) $
and $\ln \mathbb{E}[\exp(\lambda(-G_j^2 + \mathbb{E}[G_j^2]))] \leq \psi(\lambda) $. $ \mathbb{E}[\exp(\lambda(G_j^2 - \mathbb{E}[G_j^2]))]$
is not even defined for $\lambda \geq 1/(2\sigma_j^2)$ and hence the function $\psi(\lambda)$ cannot be computed. Therefore $\psi$-UCB cannot be 
applied to this framework.
\end{proof}
In our setting, $y_{kj} - \textbf{w}^\top \textbf{x}_k$ follows a normal distribution and $-(y_{kj} - \textbf{w}^\top \textbf{x}_k)^2$ 
has a sub-exponential distribution which is heavy tailed. Therefore from Lemma \ref{lemma:psi-ucb-not-possible},
an upper confidence interval cannot be obtained using $\psi$-UCB.

\subsubsection{Robust-UCB with Truncated Empirical Mean}
To devise upper confidence intervals for heavy tailed distributions,
Robust UCB \cite{Bubeck2012b} prescribes working with `robust' estimators such as a truncated empirical mean, 
where samples that lie beyond a carefully chosen range are discarded.
The necessary condition to be satisfied while applying Robust UCB is that the reward distribution of the arms should have
moments of order $1+ \epsilon$ for some $\epsilon \in (0,1]$. 
Since the distribution of $-(y_{kj}-(\textbf{w}^\top \textbf{x}_k))^2$ has finite variance, Robust UCB with the truncated empirical mean
can be used by setting $\epsilon = 1$.
At round $t$, the truncated empirical mean of the samples, 
the absolute value of which do not exceed $\sqrt{ut/\log \delta^{-1}}$, is computed as,
\begin{equation}
\label{eqn:truncated-mean}
\hat{\mu}_{t}^j =\dfrac{1}{n_j^c}\sum_{i:I_{ij}=1 } \xi_{ij} \;\;\; \mathbbm{1}\left(
  |\xi_{ij}| \leq \sqrt{ut/\log \delta^{-1}} \right)
\end{equation}
where $\xi_{ij} = -(y_{ij}- \mu_w^\top \textbf{x}_i)^2$ and 
 $\mu_w$ is the estimator of $\textbf{w}$ obtained from the variational inference algorithm. 
 In Eqn \ref{eqn:truncated-mean}, $n_j^c$ is the number of samples that are actually considered, $\delta$ is the desired confidence on the 
deviation of $\hat{\mu}_{t}^j$ from  $1/\beta_j$ for all $j$, $u$ is an upper bound on $\xi_{ij}^{1 +\epsilon}$. 
From Lemma \ref{lemma:var_upd_aysmptotics} $\mu_w$ is an unbiased estimate for $\textbf{w}$ and hence we use $\mu_w$ 
instead of $\textbf{w}$. The parameter $\delta$ can be tuned appropriately to get 
tight bounds on the regret.
We now describe the algorithm. 
\\\hrule
\begin{algorithm} [h]
 \caption{Robust UCB for selecting the annotators}
 \label{alg:robust-ucb}
 \KwIn{No. of annotators $m$, Unlabeled set $\mathcal{U}$, Labeled set $\mathcal{D}$, $n_j$, $n_j^c$, for $j = 1, \cdots, m$}
 Set $\mu_w, \Lambda_w$ using variational inference procedure described earlier;
 $t := 0 $ \;
 Set $\hat{\mu}_t^j$ for the
 annotators using Eqn (\ref{eqn:truncated-mean})\;
 \While{ ( the learner has budget or the model has not attained the desired RMSE )}{
 \begin{itemize}
     \item Choose an instance $\textbf{x}_k = 
     \argmax_{\textbf{x} \in \mathcal{U}} \textbf{x}^\top \Lambda_w^{-1} \textbf{x}$ ;\\
   \item Get a label $y_{kj^*}$ from an annotator $j^*$ such that $j^* \in \underset{1 \leq j \leq m}{\argmax} ~ \hat{\mu}_{t}^j + \sqrt{32u (\log t)/n_j} $  ;\\
   \item $t :=  t+1$ ; $n_{j^*} :=  n_{j^*} + 1$
   ; $\mathcal{D} :=  \mathcal{D} \cup \{(\textbf{x}_k,  y_{kj^*})\}$ ;
   \item Run variational inference procedure described earlier \\and update $\mu_w$;
   \item If $(y_{kj^*}- \mu_w^\top \textbf{x}_i)^2 < \sqrt{ut/\log \delta^{-1}}$ 
   \begin{itemize}
     \item $n_{j^*}^c := n_{j^*}^c + 1$ ;
     \item Update $ \hat{\mu}_t^{j*}$ using Eqn (\ref{eqn:truncated-mean});
   \end{itemize}
 \end{itemize}
 }
\end{algorithm}
\hrule

\begin{theorem}
\label{thm:regret-seq-A}
  Regret-Seq$(\text{Algo } \ref{alg:robust-ucb},T) \leq \sum_{i:\Delta_i>0} \dfrac{32u \log T}{\Delta_i} + 5 \Delta_i $.
\end{theorem}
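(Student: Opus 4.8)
The plan is to reduce the statement to the standard regret guarantee for Robust UCB and to verify that our reward distributions meet its hypotheses. Recall from the discussion preceding Algorithm~\ref{alg:robust-ucb} that, by the argument of \cite{Neufeld14}, the sequential annotator-selection regret equals the bandit regret of the MAB instance whose arm $j$ emits rewards drawn from $-(y_{kj}-\textbf{w}^\top\textbf{x}_k)^2$, with mean $-1/\beta_j$ and optimal mean $-1/\beta^*$; hence $\Delta_j=(1/\beta_j)-(1/\beta^*)$ is exactly the bandit gap. It therefore suffices to bound $\sum_{j:\Delta_j>0}\Delta_j\,\mathbb{E}[T_j(T)]$ for the truncated-mean Robust UCB policy run on these arms.

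First I would verify the moment hypothesis. Since $y_{kj}-\textbf{w}^\top\textbf{x}_k\sim\mathcal{N}(0,1/\beta_j)$, the reward $-(y_{kj}-\textbf{w}^\top\textbf{x}_k)^2$ has finite variance (the fourth moment of a Gaussian is finite), so the order-$(1+\epsilon)$ moment exists with $\epsilon=1$ and is bounded by the quantity $u$ used in the algorithm. This is precisely the regime in which \cite{Bubeck2012b} guarantees that the truncated empirical mean $\hat{\mu}_t^j$ of Eqn~(\ref{eqn:truncated-mean}) concentrates: for $s$ i.i.d.\ samples with threshold $\sqrt{us/\log\delta^{-1}}$ one has, with probability at least $1-\delta$,
\[
  \bigl|\hat{\mu}_s^j - (-1/\beta_j)\bigr| \;\le\; 4\sqrt{\tfrac{u\log\delta^{-1}}{s}} .
\]
Taking the confidence level $\delta=t^{-2}$ at round $t$ turns the right-hand side into $\sqrt{32\,u(\log t)/s}$, which is exactly the exploration bonus in Algorithm~\ref{alg:robust-ucb}; the constant $32=(4\sqrt{2})^2$ is where the $32$ in the theorem originates.

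Next I would run the standard optimism analysis. Define the good event on which every arm's truncated mean lies within its confidence interval; its complement has probability at most $\sum_t 2t^{-2}$ by a union bound over the two tails. On the good event the optimal arm's index never drops below $-1/\beta^*$, while a suboptimal arm $i$ can be selected only while its inflated index still exceeds $-1/\beta^*$, which can occur only while $T_i(t)$ is of order $u\log T/\Delta_i^2$; the precise leading constant $32$ then follows from the accounting of \cite{Bubeck2012b}. The remaining, bad-event pulls contribute only a constant (the residual $\sum_t t^{-2}$ terms together with the mandatory initialization), which is absorbed into the additive $5\Delta_i$. Collecting terms gives $\mathbb{E}[T_i(T)]\le 32\,u\log T/\Delta_i^2 + 5$; multiplying by $\Delta_i$ and summing over suboptimal arms yields $\sum_{i:\Delta_i>0}\bigl(32u\log T/\Delta_i + 5\Delta_i\bigr)$, as claimed.

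The main obstacle is that the samples actually fed to the estimator are $\xi_{ij}=-(y_{ij}-\mu_w^\top\textbf{x}_i)^2$ using the \emph{learned} coefficient $\mu_w$, which is re-estimated every round, rather than the true $\textbf{w}$; strictly, this makes the per-arm rewards neither identically distributed nor independent of the past, so the i.i.d.\ concentration lemma does not apply verbatim. I would handle this by invoking Lemma~\ref{lemma:var_upd_aysmptotics}: since $\mu_w$ is asymptotically unbiased for $\textbf{w}$, the reward distribution is treated as the ideal $-(y_{kj}-\textbf{w}^\top\textbf{x}_k)^2$, so that conditioning on $\mu_w=\textbf{w}$ reduces the problem to the vanilla Robust UCB setting above. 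A secondary point to make explicit is that $u$ must be a uniform a~priori upper bound on the second moment of $\xi_{ij}$ across all arms (which scales like the largest $1/\beta_j^2$). These two approximations are the only places where the argument departs from the exact i.i.d.\ structure assumed by \cite{Bubeck2012b}.
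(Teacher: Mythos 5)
Your proposal follows essentially the same route as the paper's proof: establish the concentration bound $\hat{\mu}_s^j \le (-1/\beta_j) + 4\sqrt{u\log\delta^{-1}/s}$ for the truncated empirical mean (the paper derives it via H\"older's and Bernstein's inequalities, you cite it from \cite{Bubeck2012b}), set $\delta = T^{-2}$ to obtain the $\sqrt{32u(\log t)/n_j}$ bonus, run the standard three-event optimism analysis to bound $\mathbb{E}[T_i(T)]$, and invoke the equivalence of Regret-Seq and Regret-MAB from \cite{Neufeld14}. Your explicit flagging of the $\mu_w$-versus-$\textbf{w}$ substitution is a point the paper also handles (only implicitly) via Lemma~\ref{lemma:var_upd_aysmptotics}, so the two arguments coincide in substance.
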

\begin{proof}
We first prove that, with probability at least $1-\delta$, 
\begin{align}
\label{robust-estimator-assumption}
\hat{\mu}_{t}^j \leq  (-1/\beta_j) + 4\sqrt{u \log \delta^{-1}/n_j}
\end{align}
Let $C_t= \sqrt{ut/\log \delta^{-1}}$. Let the random variable 
$\xi$ = $-(y_{kj}-(\textbf{w}^\top \textbf{x}_k))^2$. As mentioned earlier $\xi^{1+\epsilon} = \xi^2 < u$. Note that 
\begin{align}
\mathbb{E}\left[\xi^2 
\mathbbm{1}_{ \xi\leq C_t}\right]&=\mathbb{E}\left[|\xi^2| 
\mathbbm{1}_{ \xi\leq C_t}\right] \leq u 
\end{align}
\begin{align}
\label{holder-ineq}
\mathbb{E}[\xi\mathbbm{1}_{|\xi| > C_t }]& \leq \mathbb{E}[|\xi|^2]^{1/2}
\mathbb{E}[|\mathbbm{1}_{ \xi \geq C_t}|^2]^{1/2} \leq \sqrt{u} (P\{\xi \geq C_t\})^{1/2} \nonumber \\
& \leq \sqrt{u}(\mathbb{E}[\xi^2]/C_t^2)^{1/2} = u/C_t  
\end{align}
Equation\nobreakspace \textup {(\ref {holder-ineq})} arises due to Holder's inequality. Further,
\begin{align}
\mathbb{E}&[\xi] - \frac{1}{n_j} \sum_{t=1}^{n_j} \xi_t \mathbbm{1}_{[\xi_t \leq C_t]} = \frac{1}{n_j}\sum_{t=1}^{n_j}(\mathbb{E}[\xi]  - \mathbb{E}[\xi\mathbbm{1}_{|\xi| \leq C_t }]) \nonumber\\
&\;\;\;\;\;\;\;\;\;\; + \frac{1}{n_j}( \mathbb{E}[\xi\mathbbm{1}_{|\xi| \leq C_t }] -  \xi_t \mathbbm{1}_{[\xi_t \leq C_t]}) \nonumber \\
&=  \frac{1}{n_j}\sum_{t=1}^{n_j}\mathbb{E}[\xi\mathbbm{1}_{|\xi| > C_t }] +  \frac{1}{n_j}( \mathbb{E}[\xi\mathbbm{1}_{|\xi| \leq C_t }] -  \xi_t \mathbbm{1}_{[\xi_t \leq C_t]}) \nonumber \\
&\leq \frac{u}{C_t} + \sqrt{\frac{2u \log \delta^{-1}}{n_j}}
 + \frac{2C_n \log \delta^{-1}}{3n_j}
 \label{bernstein-eq}
\end{align}
The first term in Eqn (\ref{bernstein-eq}) arises as a consequence of Eqn (\ref{holder-ineq}) and the remaining terms arise as a result of Bernstein's inequality with some simplification. Further algebraic simplification of Eqn (\ref{bernstein-eq}) gives us Eqn (\ref{robust-estimator-assumption}).
\\For a MAB algorithm $A$ using $\hat{\mu}_t^j$ as an estimator
for $-1/\beta_j$, the regret satisfies the following bound when $\delta = T^{-2}$, where $T$ is the total time horizon of plays of the  MAB algorithm.
\begin{align}
\label{robust-ucb-regret} 
 \text{Regret-MAB(}A,T) \leq \sum_{i:\Delta_i>0} \dfrac{32u \log T}{\Delta_i} + 5 \Delta_i .
\end{align}
 Proof of Eqn (\ref{robust-ucb-regret}) involves bounding the number of trials where a sub-optimal arm is pulled, similar to the technique in \cite{UCBAuer2002,Bubeck2012b}. A pull of a sub-optimal arm indicates one of the following three events occur:(1) The mean corresponding to the best arm is underestimated (2) the mean corresponding to a sub-optimal arm is over-estimated (3) the mean corresponding to the sub-optimal arm is close to that of the optimal arm. Next we bound each of the three events and use union bound to get the final result.  Eqn (\ref{robust-estimator-assumption}) is used to get bounds for events (1) and (2). 
Regret-Seq$(\text{Algo } 1, T) = $ Regret-MAB$(\text{Robust-UCB},T )$ from \cite{Neufeld14}. 
\end{proof}
\begin{theorem}
\label{thm:wasteage}
 The expected number of samples discarded by the Robust UCB algorithm in $t$ trials of the algorithm,
 $\mathbb{E}[W(t)]\leq 4 (\log t)^2$. 
\end{theorem}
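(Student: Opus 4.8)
The plan is to express the total wastage as a sum of per-round indicators and bound the expected number of discards at each round by a $1/s$ term, so that summing over rounds yields the $(\log t)^2$ growth. Let $\xi_s$ denote the reward $-(y-\mu_w^\top \textbf{x})^2$ observed at round $s$, and let $C_s=\sqrt{us/\log\delta^{-1}}$ be the truncation threshold applied by Algorithm \ref{alg:robust-ucb} at that round. By the discarding rule (a sample is kept iff $|\xi_s|\le C_s$, as in Eqn (\ref{eqn:truncated-mean})), a sample is wasted precisely when its magnitude exceeds the threshold, so
\begin{align}
W(t)=\sum_{s=1}^{t}\mathbbm{1}\left[\,|\xi_s|>C_s\,\right],\qquad \mathbb{E}[W(t)]=\sum_{s=1}^{t}\mathbb{P}\left(|\xi_s|>C_s\right).\nonumber
\end{align}
This reduces the claim to a tail bound on $|\xi_s|$ at the round-dependent level $C_s$.

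Next I would bound each discard probability by Markov's inequality applied to the second moment. Since $|\xi_s|=(y-\mu_w^\top \textbf{x})^2\ge 0$ and the reward distribution of every arm satisfies $\mathbb{E}[\xi_s^2]\le u$ (the same second-moment bound, i.e.\ $\epsilon=1$, invoked in Theorem \ref{thm:regret-seq-A}), we obtain
\begin{align}
\mathbb{P}\left(|\xi_s|>C_s\right)=\mathbb{P}\left(\xi_s^2>C_s^2\right)\le\frac{\mathbb{E}[\xi_s^2]}{C_s^2}\le\frac{u}{C_s^2}=\frac{\log\delta^{-1}}{s}.\nonumber
\end{align}
Summing over rounds and using the harmonic-sum bound $\sum_{s=1}^{t}1/s\le 1+\log t$ gives
\begin{align}
\mathbb{E}[W(t)]\le\log\delta^{-1}\sum_{s=1}^{t}\frac{1}{s}\le\log\delta^{-1}\,(1+\log t).\nonumber
\end{align}
Setting $\delta=t^{-2}$ as in the regret analysis yields $\log\delta^{-1}=2\log t$, so $\mathbb{E}[W(t)]\le 2\log t\,(1+\log t)\le 4(\log t)^2$ for $t\ge e$, which is exactly the claimed bound.

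The computation is short, so the difficulty is bookkeeping rather than a deep obstacle. The one point that must be argued carefully is that the moment bound $\mathbb{E}[\xi_s^2]\le u$ is legitimate at every round even though the arm pulled at round $s$ is random and data-dependent; this holds because $u$ is a \emph{uniform} upper bound over all annotators, hence valid whichever arm is selected. A secondary subtlety is the use of $\mu_w$ in place of $\textbf{w}$ inside $\xi_s$: by Lemma \ref{lemma:var_upd_aysmptotics} the estimator is (asymptotically) unbiased, so the reward-distribution moment bound used in Theorem \ref{thm:regret-seq-A} carries over unchanged and no separate argument is required.
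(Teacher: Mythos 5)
Your proof is correct and follows essentially the same route as the paper's: decompose $W(t)$ into per-round discard indicators, bound each discard probability via Markov's inequality using the uniform moment bound $u$, and sum the resulting $O(\log t / s)$ terms to get $O((\log t)^2)$. The only (cosmetic) difference is that you work directly with the algorithm's threshold $\sqrt{us/\log\delta^{-1}}$ and substitute $\delta=t^{-2}$ at the end, whereas the paper folds the choice of $\delta$ into a round-dependent threshold $(us/(4\log s))^{1/(1+\epsilon)}$ from the start; both yield the stated bound, and your remarks on the uniformity of $u$ over arms and the $\mu_w$-versus-$\textbf{w}$ substitution are at least as careful as the paper's treatment.
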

\begin{proof}\let\qed\relax
 As per the robust UCB algorithm, at the $t^{th}$ time instant, the probability of the random variable $ \xi = (y_{kj} - \textbf{w}^\top \textbf{x}_k))^2$ exceeding 
 $(ut/(4\log t))^{1/(1+\epsilon)}$,
 \begin{align*}
  P(\xi &> (ut/(4\log t))^{1/(1+\epsilon)}) = P(\xi^{1+\epsilon} > ut/(4 \log t)) \\
  & \leq    \dfrac{\mathbb{E}[\xi]^{1+\epsilon} 4 \log t }{ut} \text{ (by Markov inequality)} \\
  & \leq 4 \log t/t
 \end{align*}
The number of samples discarded upto a time $n$ is 
\begin{align*}
 \mathbb{E}[W(n)]&= \sum_{t=1}^n \mathbb{E}[\mathbbm{1}[Z_t > (ut/(4 \log t))^{\epsilon/(1+\epsilon)}]] \nonumber\\
 & = \sum_{t=1}^n 4 \log t/t \leq 4 (\log n)^2 \;\;\;\hfill \square
\end{align*} 
\end{proof}

\section{The Case of Strategic Annotators}
\label{sec:payment_strategic}
Till now, we have inherently assumed that annotators are non-strategic. Now we look at the scenario where an annotator who has been allocated an instance
 is strategic about how much effort to put in. For this, we assume that, for each annotator $j$, the precision $\beta_j$ introduced while labeling an instance is proportional to the effort put in by annotator $j$.
We now refer to the effort as $\beta_j$ for simplicity. 
It is best for the learning algorithm when the annotator $j$ puts in as much effort (high $\beta_j$) as possible thereby reducing the
variance in the labeled data. A given level of effort incurs a cost to the annotator $c_j(\beta_j)$. We assume that $c_j(.)$ is a non-negative strictly increasing function of $\beta_j$ with $c_j(0) = 0$. The exact form of $c_j(.)$ is unknown to the learner.
From the annotator's point of view, a high value of effort $\beta_j$ might incur a higher cost and thus 
the annotator might not be motivated to put 
in higher effort.
\par 
In order to take into account the strategic play of the human annotators, we appeal to mechanism design techniques. Mechanism design comprises allocation 
and payment rules. 
The mechanism is to be designed to meet at least the following objectives.
\begin{definition}{ Individual Rationality (IR):}
 A mechanism is IR if the expected utility of every participating agent is non-negative.
\end{definition}
\begin{definition}{ Quality Compatibility:}
We say a mechanism is `quality compatible' at level
$\underline{\beta}$ if it induces every participating agent 
to operate under precision $\beta \geq \underline{\beta}$.
\end{definition}
We now present a mechanism design solution which meets the above design goals.\\
\textbf{Proposed Mechanism:}
 (1) We use Algorithm \ref{alg:robust-ucb} as the allocation rule. 
 (2)The payment rule for annotator $j$ when his estimated precision is $\hat{\beta}_j$ is,  
 \begin{align}
    P(\hat{\beta}_j)= B \min \left\lbrace 1, \max \left\lbrace 0,  \left( \frac{\hat{\beta}_j -\underline{\beta}}{\overline{\beta} -\underline{\beta}} \right)\right \rbrace \right \rbrace
 \end{align}
 We assume that the learner has a finite budget $B$ per example.
Also the annotators are expected to have precisions in the
range $[\underline{\beta}, \overline{\beta} ]$. An effort level in this expected range fetches a corresponding proportional payment to the annotator. If an annotator puts in 
an effort less than $\underline{\beta}$, he does not receive any payment.  An effort level higher than $\overline{\beta}$ fetches an annotator a maximum
payment of $B$, due to the limitation on the willingness of the learner.

\textbf{Annotator's optimization problem:}
The utility of the annotator when operating at the effort level $\beta_j$ is 
$U(\beta_j) =  P(\beta_j) - c_j (\beta_j)$. The optimal effort for the annotator, 
$\beta_j^* = \underset{\beta_j}{\text{argmax}} \quad U(\beta_j)  $.
\begin{theorem}{}
\label{thm:ir-ic-payment}
 The proposed mechanism is IR and quality compatible. 
\end{theorem}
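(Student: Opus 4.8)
The plan is to exploit the piecewise structure of the payment rule $P(\cdot)$ and to locate the maximizer of the agent's utility $U(\beta_j) = P(\beta_j) - c_j(\beta_j)$. I would partition the effort axis into the three regimes that $P$ induces: the dead zone $[0,\underline{\beta})$ where $P \equiv 0$, the linear zone $[\underline{\beta},\overline{\beta}]$ where $P$ rises linearly from $0$ to $B$, and the saturated zone $(\overline{\beta},\infty)$ where $P \equiv B$. The whole argument then reduces to comparing the best achievable utility in each regime. Throughout I use that $c_j$ is non-negative, strictly increasing, with $c_j(0) = 0$, and that (via the consistency of the variational estimate of $\beta_j$) the payment is evaluated in expectation at the true effort level.

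For individual rationality, I would note that $\beta_j = 0$ is always a feasible choice and yields $U(0) = P(0) - c_j(0) = 0 - 0 = 0$. Since $\beta_j^*$ maximizes $U$, we immediately get $U(\beta_j^*) \geq U(0) = 0$; because the allocation rule (Algorithm \ref{alg:robust-ucb}) only scales this surplus by the selection probability, non-negativity of the expected utility is preserved, which is exactly IR. For quality compatibility, I would argue region by region: in the dead zone $U(\beta_j) = -c_j(\beta_j) \leq 0$, with strict inequality whenever $\beta_j > 0$, so the best the agent can do below $\underline{\beta}$ is $\beta_j = 0$ with $U = 0$; in the saturated zone $U(\beta_j) = B - c_j(\beta_j)$ is strictly decreasing, so no maximizer lies strictly above $\overline{\beta}$ and the best saturated choice is $\beta_j = \overline{\beta}$. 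Hence every candidate maximizer lies in $\{0\} \cup [\underline{\beta},\overline{\beta}]$, and the only way $\beta_j^* \geq \underline{\beta}$ can fail is $\beta_j^* = 0$. Invoking that a participating agent is one with strictly positive surplus (guaranteed once $B$ is large enough that $\sup_{\beta \in [\underline{\beta},\overline{\beta}]} \{P(\beta) - c_j(\beta)\} > 0$), the inequality $U(\beta_j^*) > 0 = U(0)$ forces $\beta_j^* \in [\underline{\beta},\overline{\beta}]$, i.e. $\beta_j^* \geq \underline{\beta}$.

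The main obstacle is precisely this last boundary case. Because the kink of $P$ at $\underline{\beta}$ is a continuous transition rather than a jump, the utility just above $\underline{\beta}$ is still negative by continuity, so without a lower bound on $B$ relative to the marginal cost an agent could rationally stay at $\beta_j = 0$. The crux of the proof is therefore to pin down the notion of a \emph{participating} agent so that participation entails positive surplus — which holds as soon as the per-example budget clears the cost of some feasible precision — after which the three-zone comparison closes the argument cleanly. The remaining steps (monotonicity of $U$ in each zone and the dominance of $\overline{\beta}$ over the saturated region) are routine consequences of the stated cost assumptions.
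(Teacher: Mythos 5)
Your proposal is correct and follows essentially the same route as the paper: an elementary case analysis of $U(\beta_j)=P(\beta_j)-c_j(\beta_j)$ over the piecewise-linear payment rule, with quality compatibility restricted to \emph{participating} agents (those for whom some $\beta$ yields strictly positive surplus), which is exactly how the paper dispatches the boundary case where an agent would rationally stay at zero effort. Your organization by payment zones (dead/linear/saturated) rather than by the paper's three pictured realizations of $c_j$ is only a cosmetic difference, though your explicit treatment of the kink at $\underline{\beta}$ and of monotonicity in the saturated region is somewhat more careful than the paper's figure-based argument.
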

\begin{proof}\let\qed\relax
\begin{figure}
\centering
   \includegraphics[scale=0.55]{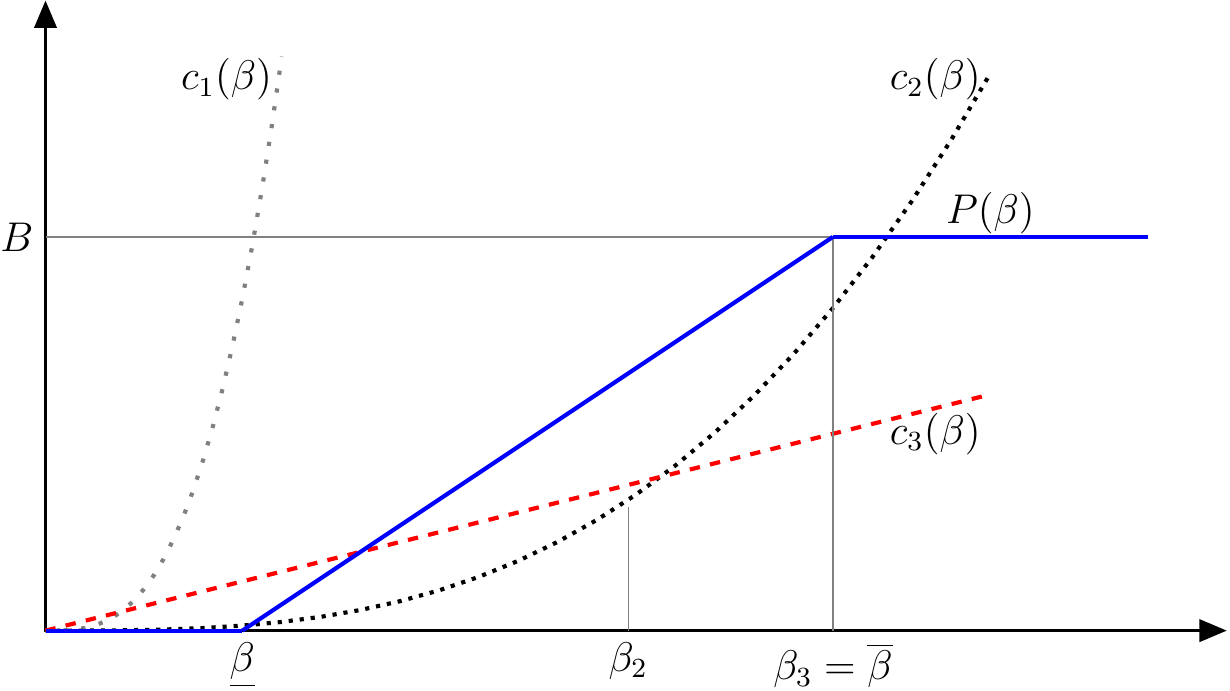}
    \caption{Realizations of $c_j(\beta_j)$ in relation to the payment rule $P(\beta_j)$  }
   \label{fig:ic-proof}
\end{figure}
The payment scheme is individually rational as annotators participate only when $P(\beta_j) > c_j(\beta_j)$ in that case, they obtain a positive utility.
The utility is therefore non-negative and hence the mechanism is IR.

In order to prove that the payment scheme is quality compatible, we consider the
three possible realizations of $c_j(\beta_j)$ in relation to the payment rule $P(\beta_j)$.
\begin{enumerate}
 \item There exists no $\beta_j$ for which $P(\beta_j) > c_j(\beta_j)$. In this scenario, an annotator will choose to not participate,
 as there is clearly no benefit from participation. The cost function $c_1(\beta)$ in Figure\nobreakspace \ref {fig:ic-proof} captures this.
 \item There exists some $\beta_j$ such that $ \underline{\beta}< \beta_j \leq \overline{\beta}$, for which $P(\beta_j) > c_j(\beta_j)$ and the maximum utility 
 is attained at $\beta_{j^*}$, such that, $ \underline{\beta} <\beta_{j^*} < \overline{\beta}$. The cost function $c_2(\beta)$ in  Figure\nobreakspace \ref {fig:ic-proof} demonstrates this scenario where an effort $\beta_2 > \underline{\beta}$ maximizes his utility. 
 \item There exists $\beta_j$ such that $ \underline{\beta} < \beta_j\leq \overline{\beta}$, for which  $P(\beta_j) > c_j(\beta_j)$ and the maximum utility 
 is attained at $\beta_{j^*} = \overline{\beta}$.
 The cost function $c_3(\beta)$ in  Figure\nobreakspace \ref {fig:ic-proof} demonstrates this.\hfill$\square$
\end{enumerate}
\end{proof}

 \section{Experimental Results}
 \label{sec:experiments}

 We conducted experiments on three real world datasets from the UCI repository  \cite{Lichman:2013} - Housing, Redwine and Whitewine, the details of which are provided in Table \ref{tab:dataset-detail}.
To simulate the annotators, we added zero-mean Gaussian noise to the output variables.
$1/\sqrt{\beta_j}$ values of the annotators were randomly chosen from two sets of intervals
$U1 = [0.1, 1]$ and $U2 = [1,2]$. 
Annotators with $1/\sqrt{\beta_j}$  chosen from interval $U1$ are clearly better than those chosen from $U2$.

     \begin{figure}[t]
      \begin{subfigure}{0.49\linewidth}
        \includegraphics[height=3cm]{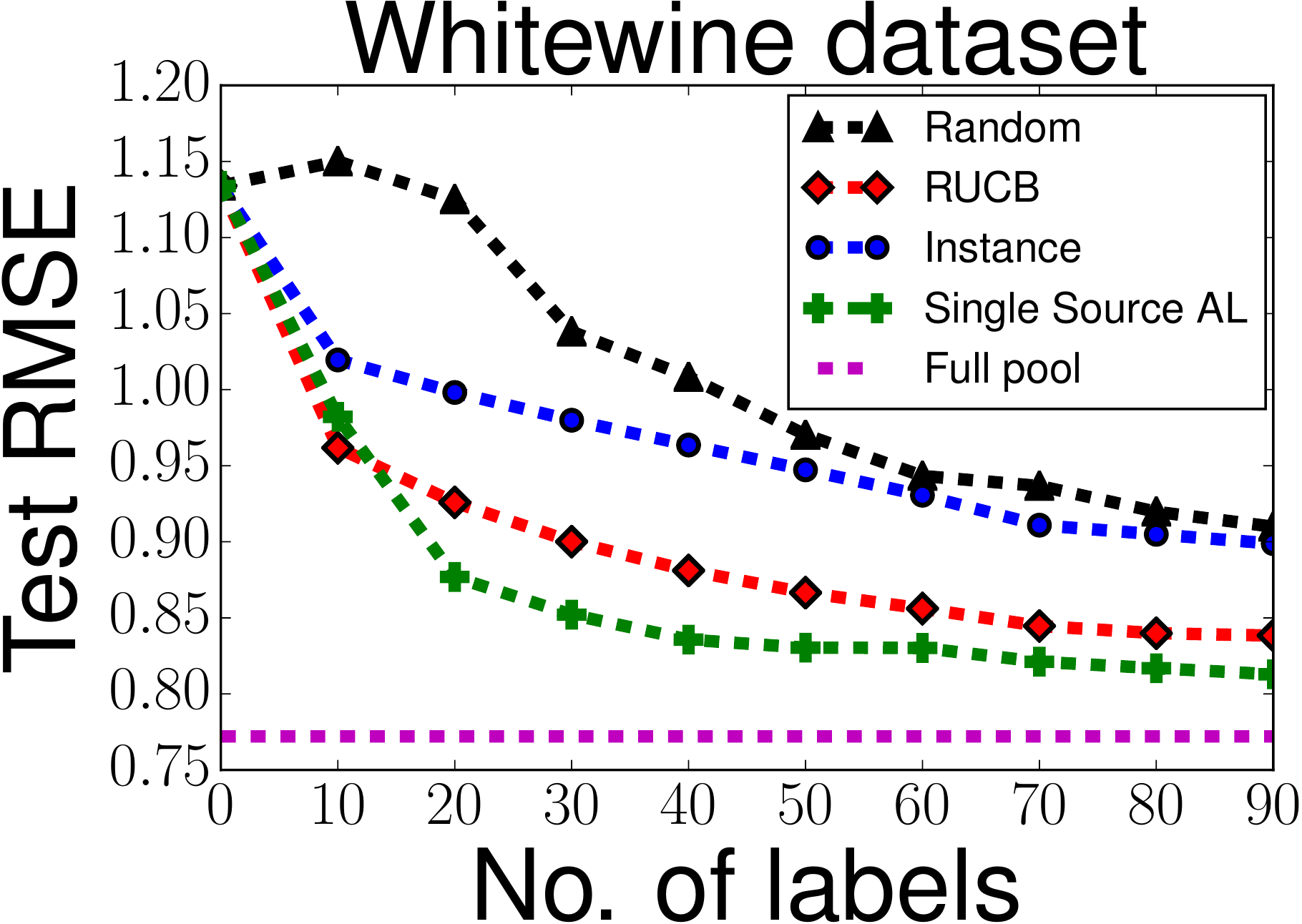}
          \subcaption{ }
        \label{fig:whitewine_rmse}
            \end{subfigure}
              \begin{subfigure}{0.49\linewidth}
        \includegraphics[height=3cm]{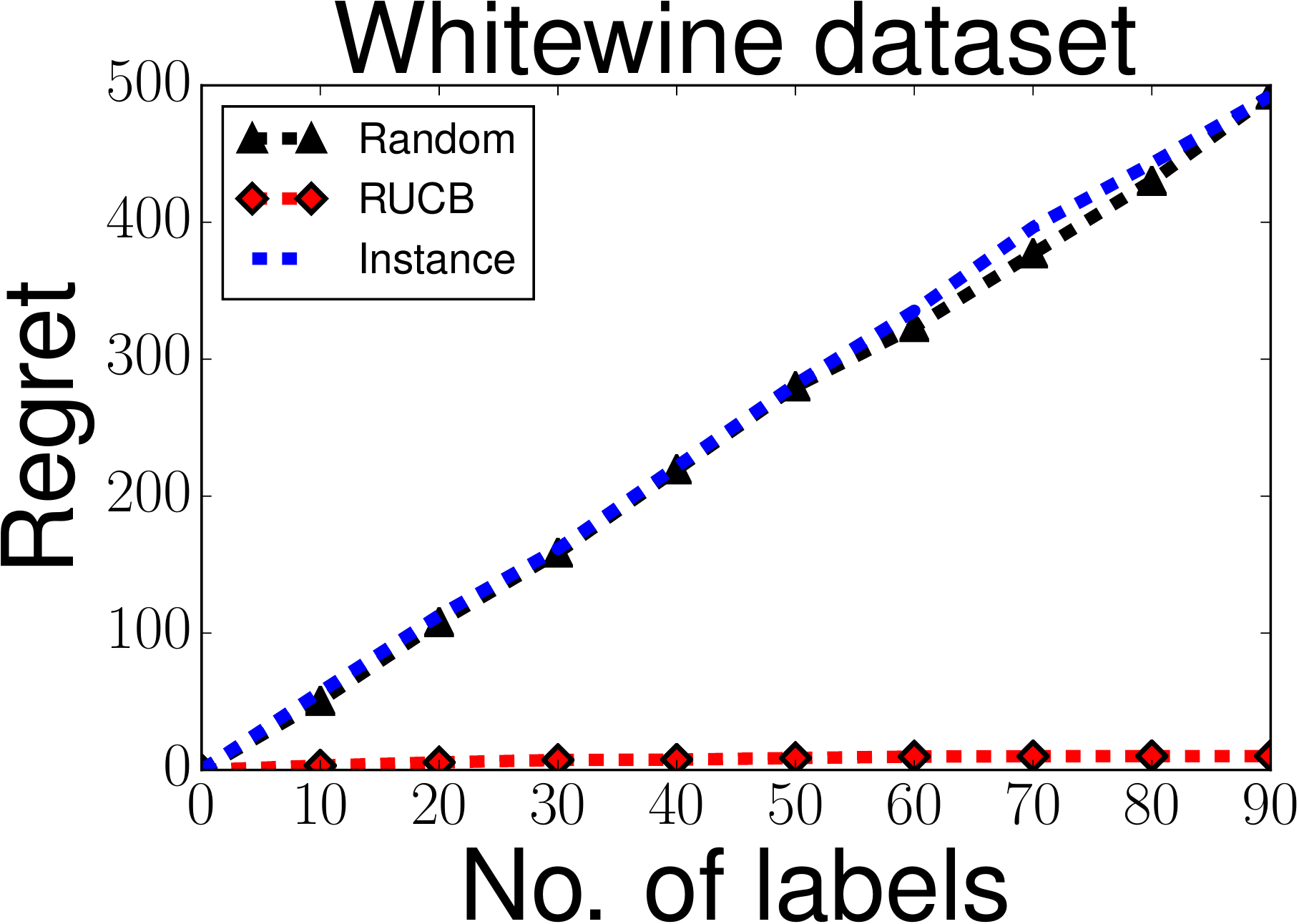}
        \subcaption{ }
        \label{fig:payment_whitewine_regret}
    \end{subfigure} \\    
    \begin{subfigure}{0.49\linewidth}
       \includegraphics[height=3cm]{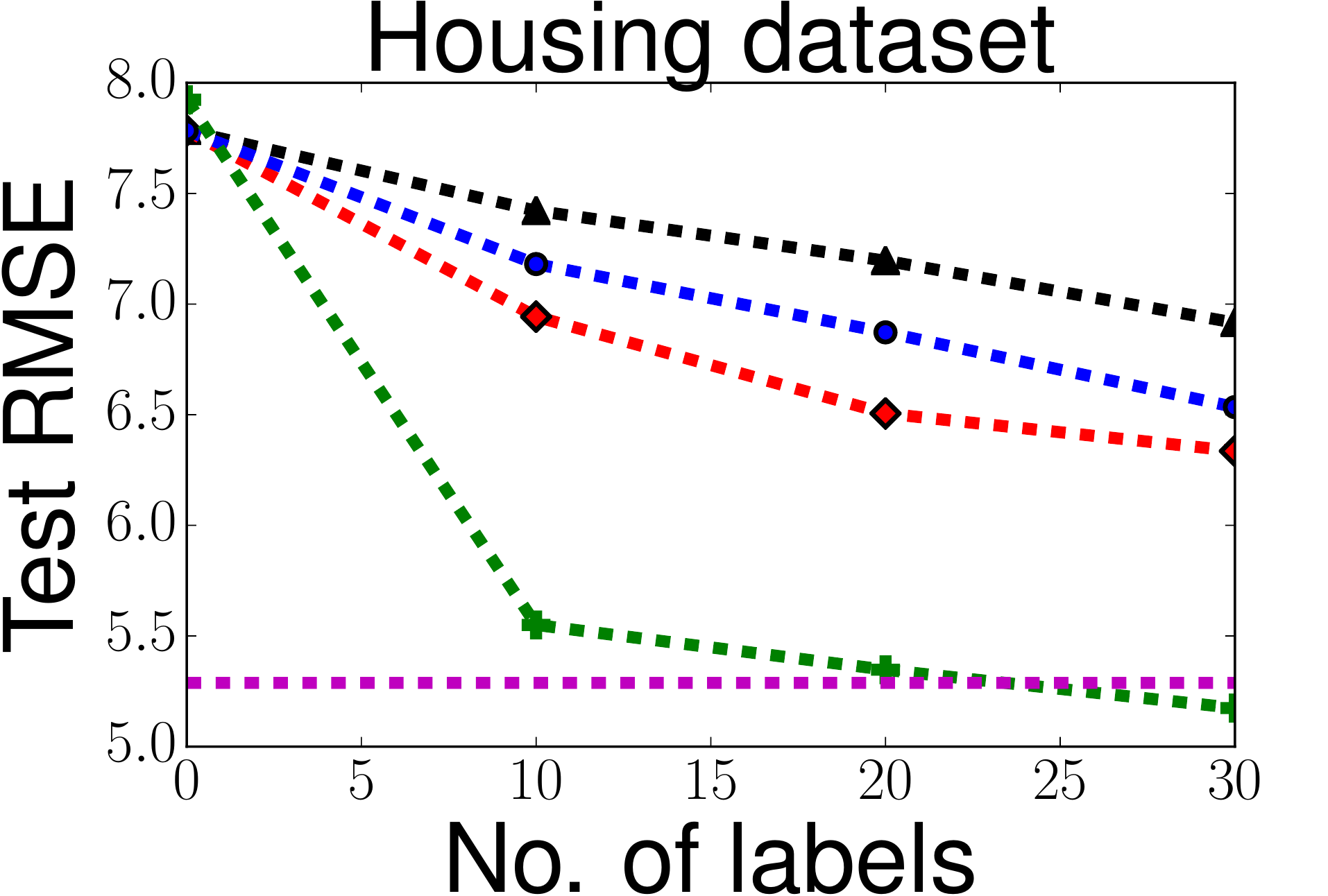}
       \subcaption{ }
        \label{fig:housing_rmse}
    \end{subfigure}
        \begin{subfigure}{0.49\linewidth}
        \includegraphics[height=3cm]{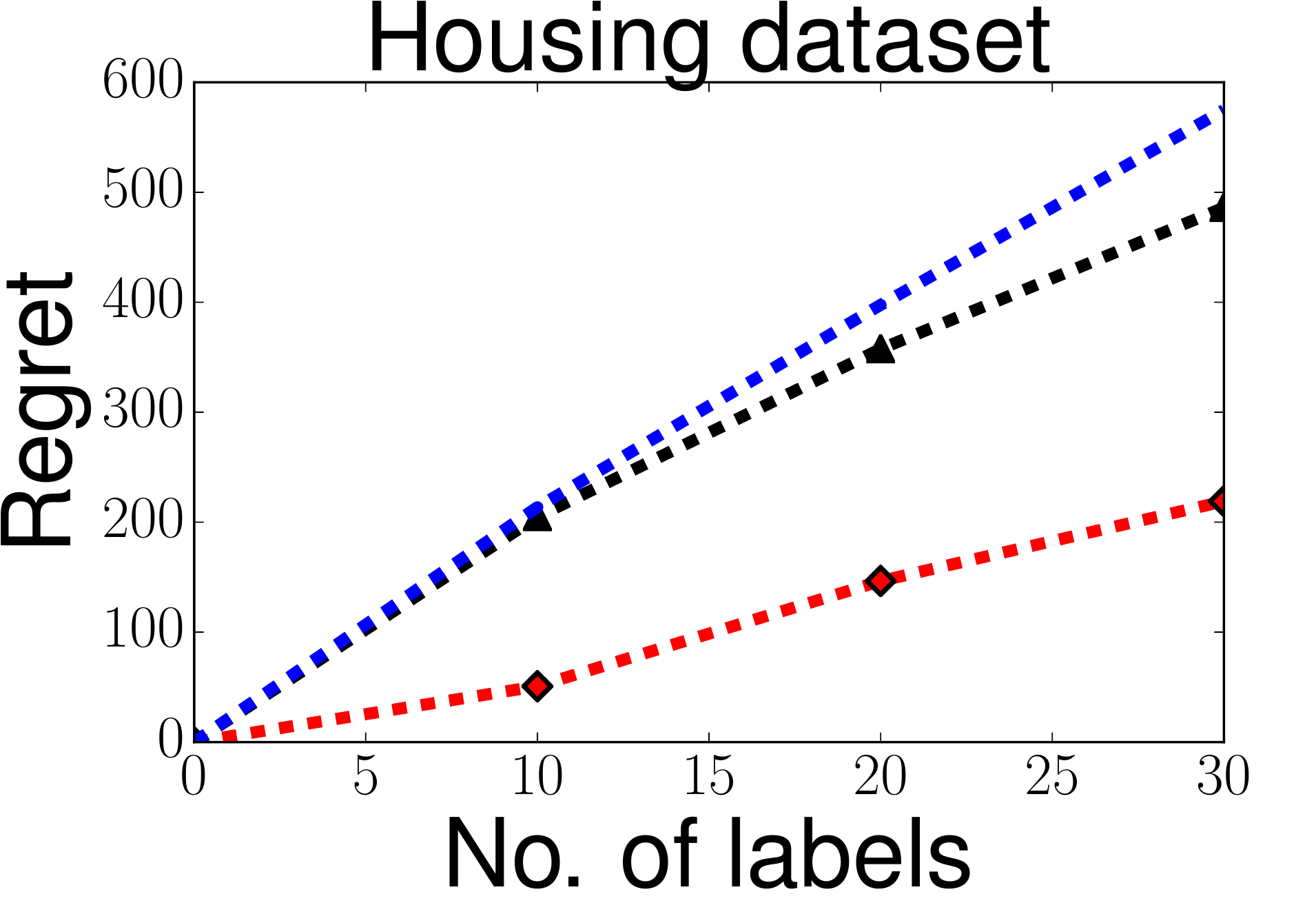}
        \subcaption{ }
         \label{fig:payment_housing_regret}
    \end{subfigure}%
    \\
         \begin{subfigure}{0.49\linewidth}
         \includegraphics[height=3cm]{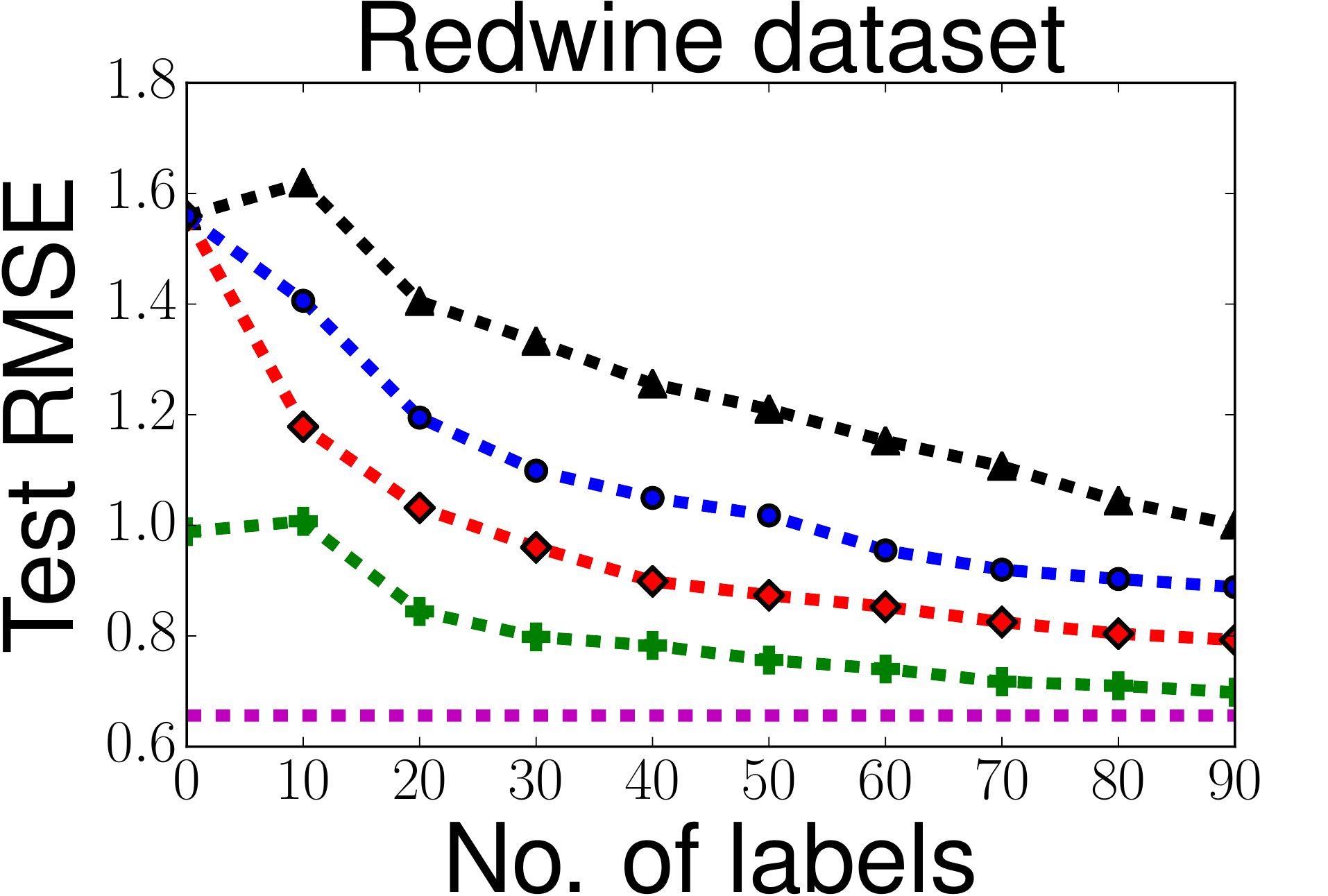}
         \subcaption{ }
         \label{fig:redwine_rmse}
     \end{subfigure}
       \begin{subfigure}{0.49\linewidth}
         \includegraphics[height=3cm]{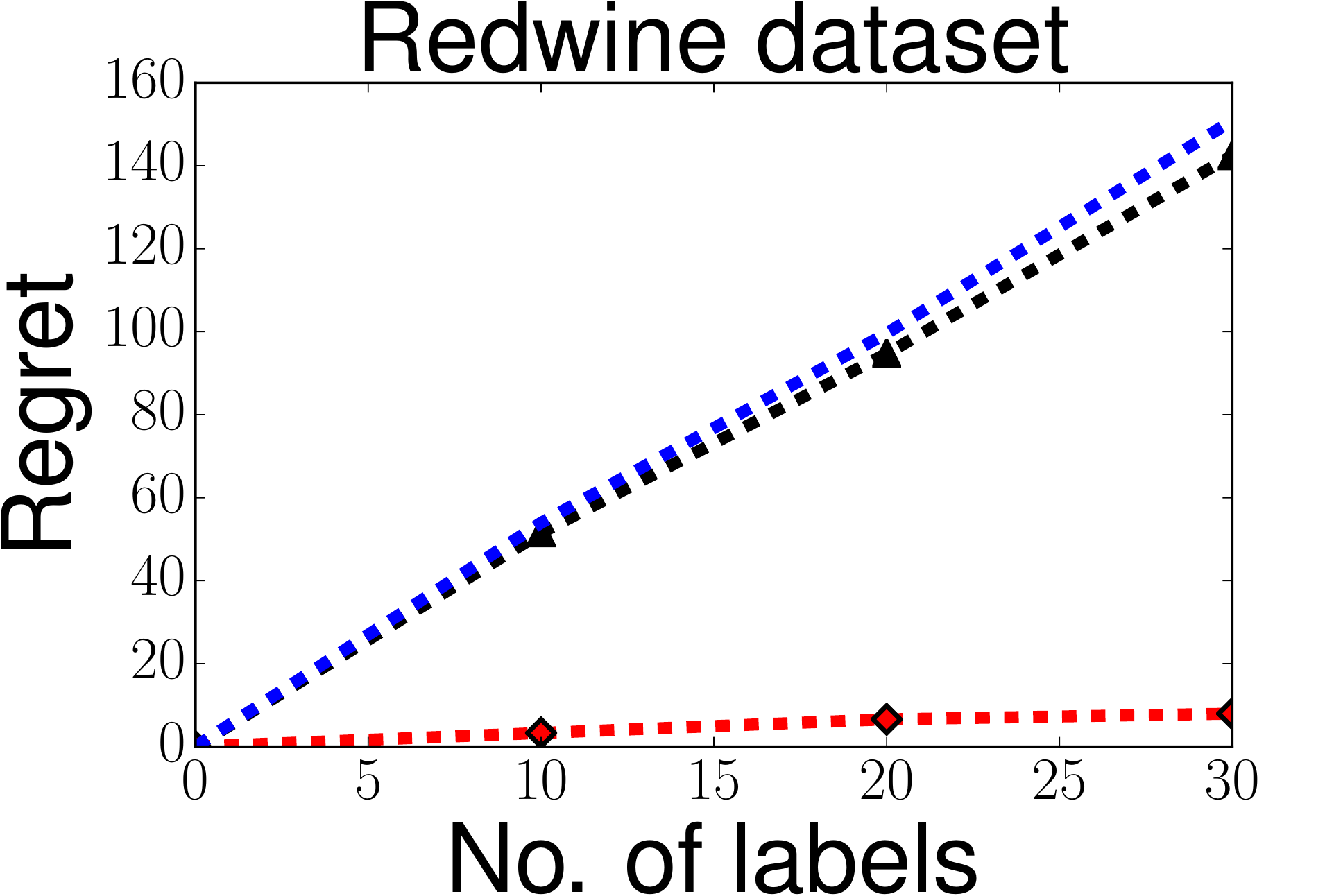}
 \subcaption{ }
         \label{fig:payment_redwine_regret}
         \end{subfigure}    
\vspace{-3mm}
    \caption{Active learning results on various datasets.
 The legends for the figures in
    each row are provided on the corresponding figure in the first row}
    \label{fig:al-plots}
    \vspace{-5mm}
    \end{figure}
    
    \subsection{Data Preprocessing}
We worked with a transformation $\Phi: {\mathbb{R}}^d \rightarrow {\mathbb{R}}^d$ of the original data matrix $\textbf{X}$. For the Housing and the Whitewine datasets, we worked with the following non-linear transformation
$
 \label{sigmoid-transform}
 \Phi_b(\textbf{x}; R_b, s) = 1/(1+\exp(- \left \Vert\textbf{x} - R_b\right \Vert/s))
$, $b = 1 \rightarrow d $
whereas, for the Redwine dataset, the
original data matrix $\bold{X}$ was used.
The value of $s$ was fixed using cross-validation.
The parameters $R_b$ for $b = 1 \rightarrow d$,  were set as the $k$-means cluster representatives of the dataset.
All the features were normalized.

 \subsection{Performance of Bayesian Parametric Model}
We compared our Bayesian parameter estimation algorithm (without active learning) with
 MLE  \cite{Raykar2012JMLR} and Gaussian Process based method  \cite{groot}. From
 the complete dataset $\mathcal{C}$, a random $30\%$ of data was used as test dataset $\mathcal{T}$. 
 We refer to the set $\mathcal{C} \setminus \mathcal{T}$ as the
 full pool of training instances $\mathcal{F}$. 50 annotators were used, out of which,  for $40$ of them,
 the parameter $1/\sqrt{\beta_j}$ was chosen from $U1$,
and for the remaining, from $U2$. The parameters of the Bayesian model described earlier 
were learnt
 using the full pool $\mathcal{F}$ labeled by the 50 annotators, as the training data.
 The experiments were repeated with 10 different splits of the data.
 We report the Average Root Mean Square Error (RMSE) scores on the test set $\mathcal{T}$. The RMSE for the test dataset containing
 $N_\text{test}$ instances
 with true output vector $\bold{z}$ and predicted output vector $\hat{\bold{y}}$, is calculated as,
$
  \text{RMSE}(\hat{\bold{y}};\bold{z}) = \sqrt{ \sum\nolimits_{i=1}^{N_\text{test}} \left(\hat{\bold{y}}_i -
  \bold{z}_i\right)^2 /N_\text{test}}
$.
Our results are provided in Table \ref{tab:var_inf_results_real_bad_annot_more_data}. Our method consistently outperforms Groot's method and compares well with MLE. 
\begin{remark}
With increasing size of the dataset, the performance of our model approaches MLE (as demonstrated in Table\nobreakspace \ref {tab:var_inf_results_real_bad_annot_more_data}, Whitewine dataset). This is consistent with the result that with increased size of training data set, Bayesian estimates perform similar to MLE \cite{BishopBook}. It further shows the efficacy of our learning scheme explained in  Section\nobreakspace \ref {sec:bayesian-lr}. The additional advantage that our model offers is the suitability to further apply active learning methods, which is not offered by other learning schemes like MLE\cite{Raykar2012JMLR} and Groot et al \cite{Ristovski2010}. 
\end{remark}

 \begin{table}[ht]

 \begin{subtable}{0.48\textwidth}
  \centering
  \begin{tabular}{|l|l|l|l|}
  \hline  Dataset & Size & $d$ &  $\Phi$ \\\hline
  Housing & 506  & 12 & Nonlinear \\
  Redwine &  1599 & 11 & Linear \\
  Whitewine & 4898 & 11 & Nonlinear \\
   \hline
  \end{tabular}
   \subcaption{Details of datasets}
\label{tab:dataset-detail}
 \end{subtable}
\begin{subtable}{0.48\textwidth}


\centering
  \begin{tabular}{|l|l|l|l|}
  \hline  Dataset & Our method & MLE & Groot et al.\\\hline
  Housing & \textbf{4.7209} & 4.93834  & 5.998169 \\
  Redwine & \textbf{0.51490}  & 0.65868	 & 0.67354 \\
  Whitewine & \textbf{0.75740} & 0.75748	&  1.235 \\
   \hline
  \end{tabular}

 \subcaption{Average test RMSE values when the whole dataset is used (without active learning). `Our method' refers to the variational inference based learning scheme explained in Section\nobreakspace \ref {sec:bayesian-lr}}
   \label{tab:var_inf_results_real_bad_annot_more_data}

 \end{subtable}
 \vspace{-3mm}
  \caption{Details of datasets and performance of the model}
  \vspace{-2mm}
 \end{table}

\subsection{Active Learning Experiments}
We now describe our experiments with the active learning criteria.
\label{sec:AL_expts}
 In order to test the results of active learning on linear regression, we used the set $\mathcal{T}$ as the test dataset as in the previous case.
  Initially, only 10 instances from $\mathcal{F}$ labeled by all annotators were used as the training set $\mathcal{D}$.
 $\mathcal{F} \setminus \mathcal{D}$ was used as the unlabeled
  set $\mathcal{U}$.   
 At every step of active learning, the label $y_{kj^*}$ of one instance $\textbf{x}_k$ was procured from an annotator $j^*$,
 chosen using Algorithm \ref{alg:robust-ucb}. The model was relearnt using the new training set
 $\mathcal{D} = \mathcal{D} \cup \{(\textbf{x}_k, y_{kj^*}) \}$.
 The RMSE was calculated on $\mathcal{T}$ and the results were  plotted at every step. We also plotted the regret 
 for Algorithm \ref{alg:robust-ucb} at 
 every step.
 The experiments were repeated for 10 different splits of the dataset. The test RMSE when the set $\mathcal{F}$ was used for training (so that $\mathcal{D} = \mathcal{F}$) was also plotted.  This error is the best achievable error in the crowdsourcing scenario. \\
 To the best of our knowledge, our work is the first attempt towards active learning for regression from the crowd and therefore there are no other baselines in the literature to compare our method against.
 However we have used the following baselines for comparison:\\
(1)Random: Random selection of instances and annotators.\\
(2)Instance: Algorithm \ref{alg:robust-ucb} for selecting the instances and random selection of annotators.\\
(3)Single Source AL: The labels were provided by a single source with negligible noise. Active selection of instances was performed using uncertainty sampling.  
 The RMSE and regret plots are provided in Figure \ref{fig:al-plots}. 
Clearly the Robust UCB strategy outperforms `Random' as well as `Instance' with respect to RMSE as well as regret
 and approaches the `Single Source AL' with fewer number of labeled examples.  

 \begin{remark}
 Our active learning algorithm demonstrates a superior performance with just a few additional labels (Figure\nobreakspace \ref {fig:al-plots}).
A similar trend was observed for the rest of the curve, which was omitted in the plots for the sake of clarity. 
 \end{remark}
 
\section{Conclusions and Future Work}
We set up a Bayesian framework to infer the parameters of linear regression using crowds. As closed form Bayesian solution is intractable, we used approximation schemes. 
To improve this initially learnt regression model, we used various active learning techniques 
and studied their theoretical foundations. We  established the connections with MAB algorithms and  explored the use of
Robust UCB for annotator selection in active learning, providing theoretical guarantees and also performing a wastage analysis.
Next, we introduced a payment scheme for annotators to
 ensure that they put in their best efforts while labeling the data. 
Our experiments on real data show the efficacy of our techniques.   
\par
Our approach of Bayesian learning, MAB algorithm for annotator selection, uncertainty sampling for instance selection and design of quality compatible mechanisms to elicit best efforts from crowd workers is applicable for a wide range of tasks like classification, ordinal regression etc. It would be interesting to study the suitability of various MAB algorithms depending on the form of the distributions used to model the annotators' qualities.
Modeling the subjectivity of the annotators, their dynamic entry and exit, and the design of incentives in these scenarios is also challenging.

%
\newpage
\bibliographystyle{abbrvnat}
\bibliography{IncentiveALRegression_V15}  
%
\end{document}